\definecolor{newcolor}{rgb}{.8,.349,.1}
\newtheorem*{rep@theorem}{\rep@title}
\newcommand{\newreptheorem}[2]{%
	\newenvironment{rep#1}[1]{%
		\def\rep@title{#2 \ref{##1}}%
		\begin{rep@theorem}}%
		{\end{rep@theorem}}}
\newtheorem{theorem}{Theorem}
\newtheorem{corollary}{Corollary}[theorem]
\newtheorem{lemma}[theorem]{Lemma}
\theoremstyle{definition}
\newcommand{\abs}[1]{\left\lvert#1\right\rvert}
\newcommand{\distas}[1]{\mathbin{\overset{#1}{\kern\z@\sim}}}%
\newsavebox{\mybox}\newsavebox{\mysim}
\newcommand{\distras}[1]{%
  \savebox{\mybox}{\hbox{\kern3pt$\scriptstyle#1$\kern3pt}}%
  \savebox{\mysim}{\hbox{$\sim$}}%
  \mathbin{\overset{#1}{\kern\z@\resizebox{\wd\mybox}{\ht\mysim}{$\sim$}}}%
}
\DeclareMathOperator{\EX}{\mathbb{E}}%
\newcommand\independent{\protect\mathpalette{\protect\independenT}{\perp}}
\def\independenT#1#2{\mathrel{\rlap{$#1#2$}\mkern2mu{#1#2}}}
\DeclarePairedDelimiterX{\infdivx}[2]{(}{)}{%
  #1\;\delimsize\|\;#2%
}
\newcommand{\KLdiv}{\text{KL}\infdivx}
\newcommand{\R}{\mathbb{R}}
\newcommand{\E}{\mathbb{E}}
\icmltitlerunning{Rate-Distortion Analysis of Minimum Excess Risk in Bayesian Learning}
\begin{document}

\twocolumn[
\icmltitle{Rate-Distortion Analysis of Minimum Excess Risk in Bayesian Learning}

\begin{icmlauthorlist}
\icmlauthor{Hassan Hafez-Kolahi}{ce}
\icmlauthor{Behrad Moniri}{ee}
\icmlauthor{Shohreh Kasaei}{ce}
\icmlauthor{Mahdieh Soleymani Baghshah}{ce}
\end{icmlauthorlist}

\icmlaffiliation{ce}{Department of Computer Engineering,
	Sharif University of Technology, Tehran, Iran}
\icmlaffiliation{ee}{Department of Electrical Engineering,
	Sharif University of Technology, Tehran, Iran}

\icmlcorrespondingauthor{Shohreh Kasaei}{kasaei@sharif.edu}

\icmlkeywords{Compressed Representation; Generalization Bound;  Information Bottleneck}

\vskip 0.3in
]

\printAffiliationsAndNotice{}  %

\begin{abstract}
In parametric Bayesian learning, a prior is assumed on the parameter $W$ which determines the distribution of samples. In this setting, Minimum Excess Risk (MER) is defined as the difference between the minimum expected loss achievable when learning from data and the minimum expected loss that could be achieved if $W$ was observed. In this paper, we build upon and extend the recent results of \cite{xu2020minimum} to analyze the MER in Bayesian learning and derive information-theoretic bounds on it. We formulate the problem as a (constrained) rate-distortion optimization and show how the solution can be bounded above and below by two other rate-distortion functions that are easier to study. The lower bound represents the minimum possible excess risk achievable by \emph{any} process using $R$ bits of information from the parameter $W$. For the upper bound, the optimization is further constrained to use $R$ bits from the training set, a setting which relates MER to information-theoretic bounds on the generalization gap in frequentist learning. We derive information-theoretic bounds on the difference between these upper and lower bounds and show that they can provide order-wise tight rates for MER under certain conditions. This analysis gives more insight into the information-theoretic nature of Bayesian learning as well as providing novel bounds. 
\end{abstract}

\section{Introduction}
\label{sec:introduction}
One of the main problems studied in statistical learning theory is the
excess risks of learning algorithms, which is the gap between the achieved error and the best possible error if the distribution was known \cite{lecam1973convergence, assouad1983deux,Keener2006}. An interesting question in this regard is to study lower bounds on the 
excess risk which could be achieved by any algorithm. This concept is usually studied in the frequentist setting, in which a family of distributions is assumed and minimax bounds are derived to study if an algorithm which only has access to $n$ samples from the distribution,  can work well for all the distributions in the family.

Recently, \cite{xu2020minimum}
proposed a framework to define and study Minimum Excess Risk (MER) in Bayesian learning. In the Bayesian learning, it is assumed that the underlying distribution is described by a variable $W\in\mathcal{W}$ and a prior $P_W$ is considered which describes the probability of any $W$ before observing data.  The joint distribution of $W$, the training set $Z^n=\{(X_i,Y_i)\}_{i=1}^n \in (\mathcal{X} \times \mathcal{Y})^n$, and a test sample $Z=(X,Y)\in \mathcal{X}\times\mathcal{Y}$, is described as $P_W \otimes{(P_{Z}^W)^n}\otimes P_Z^W$. The goal is to find a function $\hat{h}:\mathcal{X}\to\mathcal{Y}$ after observing the training set,  in a way that $\EX[\ell(Y,\hat{h}(X))]$ is small, where $\ell:\mathcal{Y}\times\mathcal{Y} \to \mathbb{R}$ is the loss function. 

In order to quantify the hardness of a problem,  \cite{xu2020minimum} define Minimum Excess Risk as the gap between the expected error of the best algorithm which only has access to the data and the minimum expected error if $W$ was also observed. They show that for a variety of loss functions, the conditional mutual information $I(W;Y|Z^n,X)$ appears in the upper bounds on MER.
When $W\in\mathbb{R}^p$, using information-theoretic results on the rate of $I(W;Z^n)$, they achieve bound of $O(\sqrt{p/n})$ on MER as $n \to \infty$, given that some assumptions on the distribution and loss hold 
(see Section \ref{sec:it_bounds_on_MER}). 
They also show that the bound can be improved to $O(p/n)$ for two specific losses: logarithmic loss and quadratic loss (for bounded $\mathcal{Y}$). They left the study of lower bounds on MER as an open problem. 

In this work, we adapt a source coding view on learning and introduce a (variant of) rate-distortion optimization which captures the notion of MER. 
Then, we demonstrate how the constraint on this rate-distortion minimization can \emph{naturally} be weakened and strengthened to achieve lower and upper bounds, respectively. These lower and upper bounds are easier to study and we derive a variety of results on them. In particular, we study the lower bound with tools from the source coding theory, and demonstrate that under some conditions, MER is lower bounded by $\Omega(p/n)$. This concludes the rate analysis of MER for the cases in which the matching upper bound $O(p/n)$ exists and show that both upper and lower bounds are order-wise tight as $n \to \infty$. As an important example, we show that the bounds are order-wise tight for quadratic loss when $\mathcal{Y}$ is bounded and the distribution is suitably \emph{smooth} (see Section~\ref{sec:applications}).

The studied rate distortion problems (original, upper bound, and lower bound) all have interesting interpretations and might be of interest by themselves.

\subsection{An Appetizer for the Rate-Distortion View}
\label{sec:appetizer}

Loosely speaking, the main idea behind the rate-distortion view developed in this paper is as follows. The variable $W$ is first generated and the dataset $Z^n$ is generated from $W$. Our goal is to observe $Z^n$ and find $\hat{h}(x)$ which performs well compared to the case where $W$ is known.
Thus, if we could decode $W$ from $Z^n$ perfectly, an MER equal to zero would be achieved. However,
in almost all applications of interest, 
it is impossible to find the exact value of $W$, since the information we can extract from $W$ to build $\hat{h}$, is bounded above by $I(W;Z^n)$. In particular, if $W$ is continuous, an infinite number of samples are needed for its exact decoding.
But of course, we don't need full recovery of $W$ to get a \emph{good enough} $\hat{h}$. 
So the question is how good we can act, based on a suitable distortion function, if only $I(W;Z^n)$ nats of information about $W$ is received \emph{through $Z^n$}. 

This line of reasoning makes it natural to study the problem as a rate-distortion optimization. %
But there is a challenge in using rate-distortion theory to study the learning problem: in an standard rate-distortion problem, we can decide how we encode $W$, but in learning problems, a (random) preprocess $W\to Z^n$ is also enforced. If we remove this constraint, we will have a standard rate-distortion problem.
Since the feasible set is enlarged, this gives us a lower bound on the original minimization. 
It is not obvious how efficient this preprocess acts; i.e., if one is asked to use $R=I(W;Z^n)$ nats to represent $W$ by an intermediate variable $\Xi$ (in an arbitrary space of choice) in a way that it is possible to recover a good $\hat{h}$, is it a good idea to just generate $n$ i.i.d. samples from $P_{XY}^W$; i.e., use $\Xi=Z^n$? 
We will try to answer such questions by quantifying and studying lower bounds.

On the other hand, there is an interesting question which is answered by studying an upper bound on the rate-distortion function. 
If we know that only $R=I(W;Z^n)$ nats about $W$ are present in the dataset $Z^n$, we might hope to be able to just extract those nats and don't rely on $Z^n$ more than necessary. 
To quantify this idea and study it, we can also restrict $I(Z^n; \hat{h})\le R = I(W;Z^n)$ and see if we can still find an $\hat{h}$ with a good performance? This scenario is similar to the frequentist approach of model compression in which 
the mutual information between the training set and the learned model 
is restricted to control the generalization gap. 

To better understand the information-theoretic properties of learning, we will study these rate-distortion functions and derive information-theoretic bounds on their difference. %
In particular we will show that (under some smoothness conditions) all three rate-distortion functions converge as $n\to\infty$. The rates of convergence are also derived which shows that the bounds are order-wise tight for quadratic loss under certain conditions. We also provide non-asymptotic information-theoretic bounds which explain the difference between these rate-distortion functions for finite samples.

While rate-distortion theory was used before in learning theoretic settings (see Section~\ref{sec:previous_work}), the systematic view developed in this paper as well as the derived bounds are novel to the best of our knowledge.

\subsection{Outline of the Paper}
In Section~\ref{sec:previous_work}, the related literature is discussed. The notations are introduced in Section~\ref{sec:notation}. Section~\ref{sec:it_bounds_on_MER} is devoted to information-theoretic upper bounds on MER. In Section~\ref{sec:rd_analysis_of_MER}, the main results of the paper on the rate-distortion view of MER are presented. In Section~\ref{sec:applications}, some applications of the developed tools are studied. Finally, conclusions and future works are presented in Section~\ref{sec:conclusion}. The proofs of theorems are presented in the supplementary materials.

\section{Related Work}
\label{sec:previous_work}
Deriving minimax bounds on excess risk in the frequentist setting is a well studied problem.
The Le Cam's and Assouad's methods are two of the most widely used approaches for deriving lower bounds on minimax risk \cite{lecam1973convergence, assouad1983deux}. Fano’s method is also a popular method based on Fano’s lower bound on the error probability in an M-ary testing problem \cite{yang1999information}. 

In Bayesian learning, one of the tracks which is related to MER, 
is the convergence of posterior to the true parameter
\cite{ghosal2000convergence,shen2001rates,ghosal2007convergence,le2012asymptotics}. The main difference between this line of works and MER studied by \cite{xu2020minimum} is that  the former tries to analyze Bayesian inference from a frequentist perspective, while in latter, $W$ is still considered as random in the analysis. The information-theoretic results used in this new setting as well as the definition of MER itself (which is based on an expectation on $P_W$) are
 influenced by this view on the problem. 
Moreover, the subject of study in previous works is usually estimation of the parameter while in MER, the Bayes risk for the random variable of interest is directly studied. Results from convergence of posterior are useful to derive bounds on MER (e.g. see Theorem~\ref{theorem:convergence_of_rate_distortion_functions} of current paper as well as Section~4 of \cite{xu2020minimum}).

These recent results of \citet{xu2020minimum} can be seen as extensions to the
universal prediction of \cite{merhav1998universal}.
In universal prediction, the accumulated loss on a sequence of samples is studied (using an approach similar to universal source coding). In contrast, this new treatment allows the analysis of the supervised setting with general subgaussian loss. 
Moreover, they utilize a refined treatment which yields direct bounds on the error of estimating the label of the test sample (the last sample) when the training set is given. 

Another track which heavily influences the current work, is the recent series of results in frequentist learning which use mutual information between the learned model and the dataset 
to control the generalization gap \cite{RussoHowmuchdoes2015,RussoControllingbiasadaptive2016,XuInformationtheoreticanalysisgeneralization2017, BassilyLearnersthatUse2018,AsadiChainingMutualInformation2018,steinke_reasoning_2020,hafez2020conditioning}. While the setting in these works is different, the mathematical tools developed to derive information-theoretic bounds are similar. In particular, we use ideas from  \cite{steinke_reasoning_2020} to derive new bounds on MER. Moreover, the upper bound on rate-distortion function studied in this paper, which is based on $I(Z^n;\hat{h})$, is directly related to this frequentist setting (see Section \ref{sec:upper_bound}). 
A relevant work on this setting is \cite{bu2020information} which used model compression to produce $\tilde{h}$ from $\hat{h}$ to control the generalization gap.

In \cite{gao2019rate}, a rate-distortion optimization is utilized to understand the limits of model compression, and results for linear models are derived. Their setting is similar to a loosened variant of the rate-distortion lower bound studied in this paper where the solution is restricted to be from the parametric family (see Section~\ref{sec:applications}).

In \cite{nokleby2016rate}, upper bounds on the expected excess risk are derived for certain class of problems which satisfy a notion of ``interpolation set". Their Bayesian treatment of the parameter makes their results comparable to the setting of \cite{xu2020minimum} when studying zero-one loss.

\section{Notation and Preliminaries}
\label{sec:notation}
Random variables and their realizations are represented with uppercase and lowercase letters respectively; e.g., $x\in\mathcal{X}$ is a realization of random variable $X$.  Conditional distributions and expectations are identified with superscripts; e.g., $P_X^z$ indicates the conditional distribution of $X$ given $Z=z$ and $\EX_X^z[f(X,z)]$ indicate the expectation of $f(X,z)$ based on this distribution.
$\KLdiv{P_X}{Q_X}=\int \log \frac{P(X)}{Q(X)}dP$ is the KL divergence of distribution $P_X$ from $Q_X$. Mutual information is defined as $I(X;Y)=\KLdiv{P_{XY}}{P_X\otimes{P_Y}}$ where $P_X$ and $P_Y$ are marginal distributions of $P_{XY}$. 
The conditional mutual information is defined as $I(X;Y|Z)=\EX_Z[I^Z(X;Y)]$ in which for all $z$, $I^z(X;Y)$ is the mutual information on the conditioned distributions $P_{XY}^z$, i.e. $I^z(X;Y)=\KLdiv{P_{XY}^z}{P_X^z\otimes P_Y^z}$.
Throughout the paper, all logarithms are in natural base and all information-theoretic quantities are in nats.

Given a distribution $P_{XY}$ on a set $\mathcal{X}\times\mathcal{Y}$ 
and a loss function 
$\ell:\mathcal{Y}\times \mathcal{Y}\to \mathbb{R}$,
the Bayes risk of estimating $Y$ from $X$ is denoted by 
$$R_{\ell}(Y|X)= \inf_{\psi:\mathcal{X}\to\mathcal{Y}} \EX[\ell(Y,\psi(X)].$$  
It is assumed that the infimum is attained, and the optimal decision for a given $x$ is represented by $\psi^*_{\ell,Y|X}(x)$, where we omit $\ell$, if it is clear from the context.

\section{MER and the Information-Theoretic Upper Bounds}
\label{sec:it_bounds_on_MER}

As described in Section \ref{sec:introduction}, a common scenario in Bayesian learning is to consider a prior distribution $P_W$ on $W$ and consider the joint distribution $P_W\otimes (P_{XY}^W)^n \otimes P_{XY}^W$ which generates $W,Z^n,Z$. Here, $Z^n=\{(X_i,Y_i)\}_{i=1}^n$ is the training set and $Z=(X,Y)$ is the test sample. Usually, it is also assumed that $X$ is independent of $W$, and we have the distribution $P_{XY}^W =P_X \otimes P_{Y}^{XW}$, i.e., the unknown parameter is just used in describing the relation between $X$ and $Y$. The goal is to predict $Y$ when $Z^n$ and $X$ are given. The Bayes risk for this task is 
$R_\ell(Y|Z^n,X)$. If the parameter $W$ was known, we could do better and achieve $R_\ell(Y|W,X)$. MER is defined as the expected extra price we should pay as a result of not knowing $W$; i.e.,
\begin{equation}
    \text{MER}_\ell ^n = R_\ell(Y|Z^n,X) - R_\ell(Y|W, X).
\end{equation}

Upper bounds on  MER for various loss functions are studied in \cite{xu2020minimum}.
Note that if we have a Markov chain $Y - U - V$ then there is a data processing inequality for Bayes risk; i.e., 
$$R_\ell(Y|U)\le R_\ell(Y|V)$$ (see Lemma~1 of \cite{xu2020minimum}).
The following lemma, which is due to 
Theorem 4 of \cite{xu2020minimum}, gives an upper bound on the looseness of this inequality 
when the loss function is bounded.  

\begin{lemma}
\label{lemma:base_bound_for_Bayes_data_process_gap}
Consider random variables $Y,U$ and $V$ forming Markov chain $Y - U - V$ and an arbitrary non-negative bounded function $\ell:\mathcal{Y}\times \mathcal{Y} \to [0,b]$. We have
\begin{equation}
R_\ell(Y|V) - R_\ell(Y|U) \le \sqrt{\frac{b^2}{2}I(Y;U|V)}.
\end{equation}
\end{lemma}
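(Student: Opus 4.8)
The plan is to reduce the inequality to a single change-of-measure estimate conditioned on $V$, and then close with Pinsker's and Jensen's inequalities. Write $r(u):=\inf_a \E_Y^u[\ell(Y,a)]$ and $\rho(v):=\inf_a \E_Y^v[\ell(Y,a)]$ for the pointwise Bayes risks; since by the standing assumption these infima are attained by measurable selections, $R_\ell(Y|U)=\E_U[r(U)]=\E_V\big[\E_U^V[r(U)]\big]$ by the tower rule, and $R_\ell(Y|V)=\E_V[\rho(V)]$. Hence $R_\ell(Y|V)-R_\ell(Y|U)=\E_V[\Delta(V)]$ with $\Delta(v):=\rho(v)-\E_U^v[r(U)]$, and it suffices to prove $\Delta(v)\le b\sqrt{\tfrac12\,I^v(Y;U)}$ for each $v$: then concavity of $t\mapsto\sqrt t$ yields $\E_V[\Delta(V)]\le b\,\E_V\big[\sqrt{\tfrac12\,I^V(Y;U)}\big]\le b\sqrt{\tfrac12\,\E_V[I^V(Y;U)]}=\sqrt{\tfrac{b^2}{2}\,I(Y;U|V)}$.

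For the per-$v$ estimate, fix $v$, let $\psi^*_{Y|U}$ be a measurable Bayes-optimal rule for predicting $Y$ from $U$, and set $g(y,u):=\ell\big(y,\psi^*_{Y|U}(u)\big)\in[0,b]$. The key step is to compare $\E[g]$ under the true conditional joint $P_{YU}^v$ with $\E[g]$ under the product $P_Y^v\otimes P_U^v$. Under the product, for each $u$ the inner expectation satisfies $\E_{Y\sim P_Y^v}[\ell(Y,\psi^*_{Y|U}(u))]\ge\inf_a\E_Y^v[\ell(Y,a)]=\rho(v)$, so $\E_{P_Y^v\otimes P_U^v}[g]\ge\rho(v)$. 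Under the true joint, the Markov property $Y-U-V$ gives $P_Y^{u,v}=P_Y^u$, hence $\E_{P_{YU}^v}[g]=\E_U^v\big[\E_Y^U[\ell(Y,\psi^*_{Y|U}(U))]\big]=\E_U^v[r(U)]$, because $\psi^*_{Y|U}(u)$ achieves $r(u)$ by definition. Subtracting, $\Delta(v)\le\E_{P_Y^v\otimes P_U^v}[g]-\E_{P_{YU}^v}[g]$. Since $g$ ranges in an interval of length $b$, the right-hand side is at most $b$ times the total variation distance between $P_{YU}^v$ and $P_Y^v\otimes P_U^v$, and Pinsker's inequality bounds that by $\sqrt{\tfrac12\,\KLdiv{P_{YU}^v}{P_Y^v\otimes P_U^v}}=\sqrt{\tfrac12\,I^v(Y;U)}$, which is exactly the claim $\Delta(v)\le b\sqrt{\tfrac12 I^v(Y;U)}$.

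I expect the only genuine obstacle is spotting the right change of measure: recognizing that the $U$-optimal decision rule, \emph{scored against the independent coupling} $P_Y^v\otimes P_U^v$, is just a rule that effectively ignores $U$ and therefore upper bounds $\rho(v)=R_\ell(Y|V=v)$, while scored against the true coupling it realizes exactly $\E_U^v[r(U)]$; so $\Delta(v)$ is squeezed between two expectations of one bounded functional and is controlled by a divergence between the two measures. Everything after that --- the tower-rule disintegration over $V$, the oscillation bound $|\E_Q g-\E_P g|\le(\sup g-\inf g)\,\mathrm{TV}(P,Q)$, Pinsker, and Jensen --- is routine; the one technical point to keep in mind is the measurable selection of $\psi^*_{Y|U}$, which the paper has already assumed away.
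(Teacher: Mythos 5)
Your proof is correct, and it follows the same overall scheme as the paper's: fix $v$, compare the expectation of $g(y,u)=\ell(y,\psi^*_{Y|U}(u))$ under the true conditional joint $P_{YU}^v$ versus the decoupled product $P_Y^v\otimes P_U^v$, observe that the latter dominates $R_\ell(Y|V=v)$ while the former equals $\E_U^v[r(U)]$ (the Markov property $Y\perp V\,|\,U$ being what makes the second identity work), and then average over $V$ with Jensen. Where you diverge is in the change-of-measure step. The paper closes the per-$v$ gap with the sub-Gaussian Donsker--Varadhan bound of \citet{XuInformationtheoreticanalysisgeneralization2017} (Lemma~\ref{lemma:xu2017main}), combined with the Hoeffding fact that a $[0,b]$-valued variable is $b/2$-subgaussian; you instead use the elementary oscillation bound $|\E_P g-\E_Q g|\le(\sup g-\inf g)\,\mathrm{TV}(P,Q)$ together with Pinsker's inequality. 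For bounded $g$ the two give the identical constant, so your argument is a legitimate and arguably simpler derivation of the same estimate. What the paper's choice buys, and what Pinsker does not, is direct extensibility to unbounded but sub-Gaussian losses: Lemma~\ref{lemma:xu2017main} only needs sub-Gaussianity of $g$ under the product measure, which is exactly what \citet{xu2020minimum} invoke (and this paper references in Section~\ref{sec:applications_linear_regression}) for, e.g., quadratic loss with Gaussian noise. Your route is cleaner for the bounded case but would need to be replaced wholesale for that extension. One last small note: your version also makes transparent that only one-sided control is needed (you only use $\E_{P_Y^v\otimes P_U^v}[g]\ge\rho(v)$, not an equality), which is slightly cleaner than the paper's phrasing ``$\EX[g(U',Y)]\ge R_\ell(Y|V)$'' since it avoids introducing the auxiliary variable $U'$ explicitly.
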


Using this lemma, it is straightforward to derive upper bounds on $\text{MER}_\ell^n$ as
\begin{align}
\text{MER}_\ell^n&=
R_\ell(Y|Z^n,X)-R_\ell(Y|W,X) \nonumber \\
&= R_\ell(Y|Z^n,X)-R_\ell(Y|W,Z^n,X)
\nonumber \\ &\le \sqrt{\frac{b^2}{2} I(Y;W|Z^n,X) }
\label{eq:MER_bound_conditional}
\\ &\le \sqrt{\frac{b^2}{2n} I(W;Z^n) },
\label{eq:MER_bound_MI_W_dataset}
\end{align}
where the second equality is due 
to the fact that  $Y\independent Z^n| W,X$.
The final inequality is proved 
by noting that $I(Z;W|Z^n)$ is a decreasing function of $n$ and applying chain rule on $I(Z^n;W)$ (see the Proof of Theorem 2 in \cite{xu2020minimum}).
Note that the bound (\ref{eq:MER_bound_conditional}) could be much better than (\ref{eq:MER_bound_MI_W_dataset}). The main reason is that it does not depend on $I(W;X)$ which could be large.
The improvement one can achieve when using (\ref{eq:MER_bound_conditional}) instead of (\ref{eq:MER_bound_MI_W_dataset}) is similar to using the \emph{conditioning} technique to improve the information-theoretic generalization bounds in frequentist learning \cite{hafez2020conditioning}. Actually, the same conditioning technique 
is at the heart of deriving the first bound (see proof of Lemma \ref{lemma:base_bound_for_Bayes_data_process_gap} in the supplementary materials).
It is also worth noting that if 
the distribution on $W$ is not known, but the capacity of channel $P_{Z^n}^W$ is limited, then $I(W;Z^n)$ is controlled and Equation~\eqref{eq:MER_bound_MI_W_dataset} can be used to derive a redundancy-capacity result similar to universal prediction \cite{merhav1998universal}.

The following lemma can be used 
along Lemma~\ref{lemma:base_bound_for_Bayes_data_process_gap} to achieve convergence rates. This is a classic result on growth rate of mutual information between observations and the parameter which can be found in \cite{clarke1990bayes, clarke1994jeffrey}.

\begin{lemma}
	\label{lemma:fisher}
	If $W$ is taking values in a $p$-dimensional compact subspace of $\R^p$, and the model $P_{Z}^w$ is smooth in $w$, then as $n \to \infty$, we have
	\begin{equation*}
	I(W; Z^n) = \frac{p}{2} \log\big(\frac{n}{2\pi e}\big) + h(W) + \frac{\E \big[\log| J_{Z}^W (W)|\big]}{2} + o(1),
	\end{equation*}
	in which $|J_{Z}^W(w)|$ is the determinant of the Fisher information matrix about $W$ contained in $Z$.	Rigorous statement of the smoothness conditions can be found in the appendix.
\end{lemma}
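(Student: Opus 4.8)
The plan is to reduce the statement to the classical Bayesian-redundancy asymptotics of \cite{clarke1990bayes,clarke1994jeffrey} and to indicate the self-contained derivation that produces exactly the three correction terms. Write $\pi$ for the density of $P_W$ and $p(z^n\mid w)=\prod_{i=1}^n p(z_i\mid w)$ for the likelihood. The natural starting point is the identity $I(W;Z^n)=h(W)-h(W\mid Z^n)$, where $h(W\mid Z^n)=\E_{Z^n}\big[h(P_{W}^{Z^n})\big]$ is the expected differential entropy of the posterior; so the whole problem reduces to an asymptotic evaluation of the posterior entropy.

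First I would establish Bernstein--von Mises behaviour under the stated smoothness conditions: for $P^W$-almost every trajectory, $P_W^{Z^n}$ equals, up to vanishing total-variation error, the Gaussian $\cN\!\big(\hat w_n,\tfrac1n J_{Z}^W(\hat w_n)^{-1}\big)$, where $\hat w_n$ is the MLE (or posterior mode) and I have used additivity of Fisher information over i.i.d.\ samples, so that the information in $Z^n$ is $n J_Z^W$. Since the differential entropy of $\cN(\mu,\Sigma)$ on $\R^p$ is $\tfrac12\log\big((2\pi e)^p\det\Sigma\big)$, substituting $\Sigma=\tfrac1n J_Z^W(\hat w_n)^{-1}$ gives
\begin{equation*}
h(P_W^{Z^n}) = \frac p2\log(2\pi e) - \frac p2\log n - \frac12\log\bigl|J_Z^W(\hat w_n)\bigr| + o(1).
\end{equation*}
Taking expectations and using consistency $\hat w_n\to W$ with continuity of $w\mapsto\log|J_Z^W(w)|$ yields $h(W\mid Z^n)=\tfrac p2\log(2\pi e)-\tfrac p2\log n-\tfrac12\,\E\big[\log|J_Z^W(W)|\big]+o(1)$; subtracting this from $h(W)$ and collecting $\tfrac p2\log n-\tfrac p2\log(2\pi e)=\tfrac p2\log\frac n{2\pi e}$ gives the claim. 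An equivalent route is to apply Laplace's method directly to $p(z^n)=\int p(z^n\mid w)\pi(w)\,dw$ about $\hat w_n$ and then average $\log\frac{p(Z^n\mid W)}{p(Z^n)}$; there the factor $2\pi e$ appears as $-\tfrac p2\log 2\pi$ from the Gaussian normaliser plus $-\tfrac p2$ from the $\chi^2_p$ limit of $2\log\frac{p(Z^n\mid\hat w_n)}{p(Z^n\mid W)}$.

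The delicate part is not this bookkeeping but the uniformity needed to convert the in-probability Bernstein--von Mises statement into convergence of \emph{expectations} of entropies, i.e.\ to justify the two exchanges of limit and expectation above. Concretely one must control: the contribution of the event on which the posterior fails to concentrate, showing the mass of $\int p(z^n\mid w)\pi(w)\,dw$ away from $\hat w_n$ is negligible at the required rate; uniform integrability of $\log|J_Z^W(\hat w_n)|$ and of $-\log\pi(\hat w_n)$, so that pointwise convergence upgrades to $L^1$; and the usual local-asymptotic-normality regularity (a continuous positive-definite Fisher information, differentiability in quadratic mean, and a prior with full-dimensional support, bounded away from $0$ and $\infty$ on compacta, with $-\log\pi$ integrable). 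These are precisely the hypotheses packaged in the cited references, so in the appendix I would either invoke their theorem directly or reproduce the Laplace expansion under the same conditions, verifying that every error term is $o(1)$ uniformly enough to survive the expectation over $W$ and $Z^n$.
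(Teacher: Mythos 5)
The paper does not actually prove Lemma~\ref{lemma:fisher}: it attributes the statement directly to \cite{clarke1990bayes,clarke1994jeffrey} as a classical result, and the appendix merely restates Clarke and Barron's regularity conditions without a derivation. Your sketch therefore supplies more than the paper does. That said, it is the \emph{right} sketch: the decomposition $I(W;Z^n)=h(W)-h(W\mid Z^n)$, the Bernstein--von Mises/Laplace approximation of the posterior by $\cN\bigl(\hat w_n,\tfrac1n J_Z^W(\hat w_n)^{-1}\bigr)$, the Gaussian entropy formula $\tfrac12\log\bigl((2\pi e)^p\det\Sigma\bigr)$, and consistency $\hat w_n\to W$ are exactly the ingredients of the Clarke--Barron argument, and the bookkeeping you do (yielding $\tfrac p2\log\tfrac n{2\pi e}+h(W)+\tfrac12\E\log|J_Z^W(W)|$) is algebraically correct. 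You also correctly flag that the genuine content is the uniform integrability needed to pass from almost-sure posterior asymptotics to convergence of the \emph{expected} posterior entropy, and you resolve that by appealing to the same hypotheses and references the paper invokes. So your proposal is consistent with, and a useful expansion of, what the paper asserts; the only caveat is that presenting it as ``the proof'' would be overclaiming unless you actually reproduce the uniformity estimates from \cite{clarke1994jeffrey} rather than gesturing at them, since that is where all the difficulty lies.
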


Using this lemma, it can be shown that 
$I(Y;W|Z^n) = O(1 / n)$ as $n \to \infty$.
This gives us a rate of $O(\sqrt{1 /n})$ on $\text{MER}_\ell^n$ for any bounded loss. In \cite{xu2020minimum}, it is also proved that for bounded quadratic loss and logarithmic loss the square root can be removed which improves the rate to $O(1/n)$.

\subsection{Dropping the Square Root}
\label{sec:dropping_the_square_root}
Whether it is possible to drop the square root for a general bounded loss is an open problem. In this section we demonstrate that this is possible for the realizable case. %

\begin{lemma}
\label{lemma:bound_for_Bayes_data_process_gap_for_realizable}
Consider random variables $Y, U$, and $V$ forming Markov chain $Y - U - V$ and an arbitrary non-negative bounded function $\ell:\mathcal{Y}\times \mathcal{Y} \to [0,b]$. We have 
\begin{equation}
R_\ell(Y|V)  \le 2 R_\ell(Y|U)  +3bI(Y;U|V).
\end{equation}
\end{lemma}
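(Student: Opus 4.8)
The plan is to condition on $V=v$, reduce to a statement about a single mixture distribution, and then transfer a ``small risk at the correct component'' to the mixture via a layer‑cake decomposition combined with the data‑processing inequality for KL divergence; this level‑by‑level comparison is exactly what lets us avoid the square root.

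First I would use the Markov property $Y-U-V$, which gives $P_{Y|U=u,V=v}=P_{Y|U=u}$, to write $R_\ell(Y|U)=\E_V\big[\E_{u\sim P_{U|v}}[R_\ell(Y|U=u)]\big]$, and likewise $R_\ell(Y|V)=\E_V[R_\ell(Y|V=v)]$ and $I(Y;U|V)=\E_V[I(Y;U|V=v)]$. Hence it suffices to establish, for each fixed $v$,
$$R_\ell(Y|V=v)\ \le\ 2\,\E_{u\sim P_{U|v}}\big[R_\ell(Y|U=u)\big]+3b\,I(Y;U|V=v).$$
Write $\rho=P_{U|V=v}$, $P_u=P_{Y|U=u}$, and $Q=P_{Y|V=v}=\E_{u\sim\rho}[P_u]$; let $\hat y_u$ attain the infimum in $R_\ell(Y|U=u)=\E_{Y\sim P_u}[\ell(Y,\hat y_u)]$ (it exists by assumption), and recall the disintegration $I(Y;U|V=v)=\E_{u\sim\rho}\big[\KLdiv{P_u}{Q}\big]$.

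For any fixed $u_0$, the constant decision $\hat y_{u_0}$ is a (generally suboptimal) rule for the $V$‑problem, so $R_\ell(Y|V=v)\le \E_{Y\sim Q}[\ell(Y,\hat y_{u_0})]$; averaging this over $u_0\sim\rho$, it is enough to prove the pointwise transfer inequality
$$\E_{Y\sim Q}[\ell(Y,\hat y_{u_0})]\ \le\ 2\,\E_{Y\sim P_{u_0}}[\ell(Y,\hat y_{u_0})]+3b\,\KLdiv{P_{u_0}}{Q},$$
since taking $\E_{u_0\sim\rho}$ of both sides yields exactly the displayed bound (one could even keep an infimum over $u_0$ for a slightly sharper statement). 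To prove it I would apply the layer‑cake identity $\E_{Y\sim\mu}[\ell(Y,\hat y_{u_0})]=\int_0^b \mu(A_t)\,dt$ with $A_t=\{y:\ell(y,\hat y_{u_0})>t\}$, for $\mu=Q$ and $\mu=P_{u_0}$. For each level $t$, data processing for KL divergence applied to the indicator of $A_t$ gives $\KLdiv{\mathrm{Ber}(P_{u_0}(A_t))}{\mathrm{Ber}(Q(A_t))}\le\KLdiv{P_{u_0}}{Q}$ (with $\mathrm{Ber}(\cdot)$ a Bernoulli law), so it remains to show the scalar inequality
$$q\ \le\ 2p+3\,\KLdiv{\mathrm{Ber}(p)}{\mathrm{Ber}(q)}\qquad\text{for all }p,q\in[0,1],$$
and then integrate over $t\in[0,b]$. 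This is immediate when $q\le 2p$; for $q>2p$ I would set $\phi(p)=3\,\KLdiv{\mathrm{Ber}(p)}{\mathrm{Ber}(q)}-q+2p$ and check $\phi''(p)=3(1/p+1/(1-p))>0$, together with $\phi'(p)=3\log\frac{p(1-q)}{q(1-p)}+2$ so that $\phi'(q/2)=3\log\frac{1-q}{2-q}+2<0$ (because $\frac{1-q}{2-q}\le\frac12<e^{-2/3}$); hence $\phi'<0$ on $[0,q/2]$ and $\min_{[0,q/2]}\phi=\phi(q/2)=3\,\KLdiv{\mathrm{Ber}(q/2)}{\mathrm{Ber}(q)}\ge 0$.

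The main obstacle, conceptually, is pinning down the right change of measure. The two reflexes — Pinsker's inequality or a Donsker–Varadhan bound — either reintroduce a $\sqrt{\cdot}$ or become vacuous here: since $Q=\E_\rho[P_u]$ is the mixture, a Donsker–Varadhan bound from each $P_u$ to the reference $Q$ collapses after averaging over $u$, while the reverse direction (reference $P_{u_0}$) is unavailable because $Q$ need not be absolutely continuous with respect to $P_{u_0}$. The way out is to compare $\E_{Y\sim Q}$ and $\E_{Y\sim P_{u_0}}$ of the \emph{same} loss through the per‑level binary divergences; the only calculation that really needs care is then the scalar inequality, and in particular that it holds with exactly the constants $2$ and $3b$ advertised in the statement.
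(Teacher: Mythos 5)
Your proof is correct, and it reaches the bound by a genuinely different route at the key step. The paper conditions on $V=v$, draws an independent posterior sample $U'\sim P_{U}^{v}$, and then invokes (as a black box) the symmetrization/decoupling result of Steinke and Zakynthinou (their Theorem~5.8, together with $\mathrm{CMI}\le I(W;Z)$), restated as $\E[g(\bar X,\bar Y)]\le 2\E[g(X,Y)]+3b\,I(X;Y)$, applied to $g(u,y)=\ell(y,\psi^*_{Y|U}(u))$; averaging over $v$ finishes the proof. Your outer reduction is the same — condition on $V$, transfer the $U$-optimal decision $\hat y_{u_0}$ with $u_0$ drawn from the posterior $P_{U|V=v}$, and average — but where the paper cites the external lemma you prove the needed decoupling inequality
\begin{equation*}
\E_{Y\sim Q}[\ell(Y,\hat y_{u_0})]\;\le\;2\,\E_{Y\sim P_{u_0}}[\ell(Y,\hat y_{u_0})]+3b\,\KLdiv{P_{u_0}}{Q}
\end{equation*}
from first principles, via the layer-cake identity, per-level data processing down to Bernoulli divergences, and the scalar inequality $q\le 2p+3\,\KLdiv{\mathrm{Ber}(p)}{\mathrm{Ber}(q)}$, whose verification (convexity in $p$, sign of $\phi'(q/2)$ since $3\log 2>2$, and $\phi(q/2)\ge0$) I checked and which is sound; averaging over $u_0\sim\rho$ then reproduces $I(Y;U|V=v)$ exactly as the mixture representation requires. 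What your route buys is a self-contained, elementary argument that makes the origin of the constants $2$ and $3b$ completely transparent (they come from the scalar inequality) and sidesteps both the CMI machinery and the $\mathrm{CMI}\le I$ step; what the paper's route buys is brevity and the fact that working through the CMI formulation can in principle give sharper (conditional-mutual-information based) variants. One cosmetic remark: the tower decomposition $R_\ell(Y|U)=\E_V\E_{U|V}[R_\ell(Y|U=u)]$ does not itself need the Markov property — that property is what lets you identify $P_{Y|U=u,V=v}$ with $P_u$ in the mixture $Q=\E_{u\sim\rho}[P_u]$ and in $I(Y;U|V=v)=\E_{u\sim\rho}[\KLdiv{P_u}{Q}]$, which is where your argument actually uses it.
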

To prove this bound, a symmetrization technique that is used in \cite{steinke_reasoning_2020} to derive a variety of bounds on generalization gap, is adapted. 
For the case where $R_\ell({Y|U})$ is close to zero, this bound can give better results compared  to Lemma~\ref{lemma:base_bound_for_Bayes_data_process_gap}. 
The mutual information $I(Y;U|V)$ can be unbounded in certain problems. In particular, in realizable setting, this can happen when the random variables are continuous and the relation between them is deterministic. 
Informally, this is due to the fact that $I(Y;U|V)$ quantifies the nats necessary for full recovery of $Y$ (which could be unbounded for continuous random variables). 
This can be solved by covering the space of $\mathcal{Y}$ at different levels and adopting the chaining technique to acquire sharper bounds (e.g., see \cite{AsadiChainingMutualInformation2018} for an application of the chaining technique on information-theoretic generalization bounds). This is discussed in the supplementary materials. %

\section{Rate-Distortion Analysis of MER} \label{sec:rd_analysis_of_MER}
Inequalities (\ref{eq:MER_bound_MI_W_dataset}) and (\ref{eq:MER_bound_conditional}) give us information-theoretic \emph{upper} bounds on MER. Lower bounding MER in Bayesian learning has been remained as an open problem \cite{xu2020minimum}. The tools we develop in this section let us study the lower bounds as well.

\subsection{The Challenge of Finding a Lower Bound} %
First, it should be noted that it is not possible to have a matching lower bound in the form of $\mathrm{(MER)}_\ell^n \geq \alpha \sqrt{I(Y;W|Z^n,X)}$ for some $\alpha>0$. Loosely speaking, the reason is that it is possible that $Y$ and $W$ share many bits but those bits are not used in the loss function. In other words, the information contained in $W$ about $Y$ is not necessarily related to loss.
As an example, consider the toy problem where $\mathcal{W}=\mathcal{Y}=[-2,2]^2$, and
\begin{equation*}
    \begin{cases}
        W_i\distas{\text{i.i.d.}} \mathrm{Unif}(-1,1),\\
        \epsilon_i\distas{\text{indep.}} \mathrm{Unif}(-a_i,a_i),\\
        Y=(Y_1,Y_2)=(W_1+\epsilon_1, W_2+\epsilon_2).
    \end{cases}
\end{equation*}
Define $\ell((y_1,y_2), (\hat{y}_1,\hat{y}_2))=c_1 (y_1-\hat{y}_1)^2 + c_2 (y_2-\hat{y}_2)^2$. Here, $a_1,a_2, c_1$, and $c_2$ are hyper-parameters defining the problem.   Now consider the extreme case where $a_1=0, a_2=1, c_1=1$, and $c_2=0$. In this case, the loss function is ignoring the second dimension of $Y$, which is the harder one to estimate. Actually, by observing a single sample, $W_1$ is found and $\forall n>1, \mathrm{MER}_\ell^n = 0$. %
However, the mutual information $I(W;(Y_1,Y_2)|Z^n)\ge I(W;Y_2|Z^n)$ approaches zero only as $n\to \infty$.%

Based on such observations, we argue that in order to derive MER lower bounds, the relation between 
the used rate and the loss function $\ell$ should be considered more carefully. This was one of the main motivations to define the MER as a rate-distortion problem. 
But, it is also insightful in itself to study Bayesian learning from a source coding perspective, as was discussed in Section~\ref{sec:appetizer}.

\subsection{Rate-Distortion Optimization}
\label{sec:subsection_rate_distortion_optimization}

Rate-Distortion theory was introduced by \cite{Shannonmathematicaltheorycommunication1948,shannon1959coding} to quantify the minimum average number of bits needed to transmit a random variable with a given maximum distortion. Let $P_X$ be a distribution over $\mathcal{X}$ and $X^n = \{X_i\}_{i = 1}^n$ be $n$ i.i.d. samples of $P_X$. An encoder $f_n: \mathcal{X}^n \to \{1, 2, \dots, 2^{nR}\}$ maps the message into a codeword, and the decoder $g_n: \{1, 2, \dots, 2^{nR}\} \to \hat{\mathcal{X}}^n$, decodes the codeword.  The distortion function $d: \mathcal{X}\times\hat{\mathcal{X}} \to \R^+$, measures the distortion and $d(X^n, \hat{X}^n)$ is the average distortion of $X_i$ and $\hat{X}_i$s. For a given rate $R$, the rate-distortion function $D(R)$ is the infimum of all distortions $D$, such that there exists a sequence $(f_n, g_n)$ with codeword size $2^{nR}$, that $\lim_{n\to \infty} \E[d(X^n, g_n(f_n(X^n))] \leq D$. It is shown that
\begin{align*}
    D(R)= \inf_{P_{\hat{X}}^{X}} \EX[d(X,\hat{X})]
 \;\; \mathrm{s.t.}\;\; I(X; \hat{X})\le R. 
\end{align*}
We denote this optimization as the rate-distortion minimization, and the function $D(R)$ as the the rate-distortion function (some authors call this the distortion-rate function to contrast with another function $R(D)$ which maps distortion to rate). For an overview of the classic rate distortion theory, see Chapter 10 of \cite{CoverElementsinformationtheory2012}. 

Now, we are ready to precisely define the rate-distortion problem describing the MER.
To do so, let's define the distortion function as the excess risk of $\hat{h}$ compared to the Bayes decision $h^*_w(x)$, i.e.
\begin{equation}
\label{eq:d-def}
d(w,\hat{h}) = \EX_{XY}^w [\ell(Y,\hat{h}(X))-\ell(Y,h^*_w(X))].
\end{equation}
Note that this definition is consistent with our final goal which is to study MER: if we consider the optimal learning algorithm which generates $\hat{h}(.)= \psi^*_{Y|Z^nX} (z^n, .)$ for any given dataset $Z^n=z^n$, the expected distortion is 
\begin{align}
\EX_{WZ^n}[&d(W,\psi^*_{Y|Z^nX}(Z^n, \cdot \,))] \nonumber\\
&= \EX_{WZ^nXY}[\ell(Y,\psi^*_{Y|Z^nX} (Z^n, X))
\nonumber\\&\hspace{70pt}-\ell(Y,\psi^*_{Y|WX}(W,X))]
\nonumber\\ &= R_\ell(Y|Z^n,X) - R_\ell(Y|W,X)
\nonumber\\ & = \text{MER}_\ell^n.
\label{eq:expected_distortion_eq_MER}
\end{align}

Now, we define the (constrained) rate-distortion optimization as
\begin{align}
\label{eq:rate_distortion_base}
    D_n(R)= \inf_{P_{\hat{h}}^{Z^n}} \EX[d(W,\hat{h})],
\\
\mathrm{s.t.}\; I(W;\hat{h})\le R, \nonumber
\end{align}
in which the expectation and mutual information are evaluated with respect to $P_{W\hat{h}}$ which is the marginal distribution of $P_W\otimes{P_{Z^n}^W}\otimes{P_{\hat{h}}^{Z^n}}$. Note that in standard rate-distortion problems, we are allowed to directly optimize $P_{\hat{h}}^W$. However, here there is an extra constraint that $P_{\hat{h}}^W={P_{Z^n}^W}\otimes{P_{\hat{h}}^{Z^n}}$.
Also note the dependence of $D_n(R)$ on $n$: for each $n$ there is a different rate-distortion optimization which yields $D_n$. Thus, we have a series of optimization problems. It is easy to verify that  $D_n(R)$ is non-increasing in both $n$ and $R$.

The following 
theorem states the relation between $D_n(R)$ and $\text{MER}_\ell^n$.
\begin{theorem}
\label{theorem:rate_distortion_eq_MER}
For a given training set size $n$, for all rates $R\ge I(W;Z^n)$, we have 
\begin{equation*}
D_n(R) = \mathrm{{MER}}_\ell^n.
\end{equation*}
\end{theorem}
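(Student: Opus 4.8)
The plan is to establish the two inequalities $D_n(R)\le \mathrm{MER}_\ell^n$ and $D_n(R)\ge \mathrm{MER}_\ell^n$ separately. The first observation I would make is that, for $R\ge I(W;Z^n)$, the rate constraint in \eqref{eq:rate_distortion_base} is satisfied by \emph{every} conditional $P_{\hat h}^{Z^n}$: since $W-Z^n-\hat h$ is a Markov chain, the ordinary (mutual-information) data-processing inequality gives $I(W;\hat h)\le I(W;Z^n)\le R$. Hence in this regime $D_n(R)$ is just the unconstrained infimum $\inf_{P_{\hat h}^{Z^n}}\EX[d(W,\hat h)]$, and it suffices to work with that.

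For the upper bound, I would use the deterministic channel that maps each $z^n$ to the Bayes-optimal predictor $\psi^*_{Y\mid Z^nX}(z^n,\cdot)$. This is a legitimate (degenerate) choice of $P_{\hat h}^{Z^n}$, it is feasible by the remark above, and by the computation already recorded in \eqref{eq:expected_distortion_eq_MER} its expected distortion equals exactly $\mathrm{MER}_\ell^n$. Therefore $D_n(R)\le \mathrm{MER}_\ell^n$.

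For the lower bound, I would fix an arbitrary $P_{\hat h}^{Z^n}$ and show $\EX[d(W,\hat h)]\ge \mathrm{MER}_\ell^n$. Writing $\EX[d(W,\hat h)]=\EX[\ell(Y,\hat h(X))]-\EX_{WXY}[\ell(Y,h^*_W(X))]$, the second term carries no $\hat h$-dependence and equals $R_\ell(Y\mid W,X)$ by definition of the Bayes risk (the infimum over maps $(w,x)\mapsto \hat y$ is attained pointwise at $h^*_w$). It then remains to show $\EX[\ell(Y,\hat h(X))]\ge R_\ell(Y\mid Z^n,X)$. The key structural fact is that $\hat h$ is generated from $Z^n$ alone through the channel $P_{\hat h}^{Z^n}$, so $(X,Y)\perp \hat h\mid Z^n$, hence the joint law factors as $P_{Z^n}\otimes P_{XY}^{Z^n}\otimes P_{\hat h}^{Z^n}$ (where $P_{XY}^{Z^n}$ is the posterior predictive) and in particular $Y\perp \hat h\mid (Z^n,X)$. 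Conditioning on $Z^n=z^n$ and $X=x$, for each realization of $\hat h$ the value $\hat h(x)$ is a fixed element of $\mathcal Y$, so $\EX_Y^{z^n,x}[\ell(Y,\hat h(x))]\ge \inf_{\hat y\in\mathcal Y}\EX_Y^{z^n,x}[\ell(Y,\hat y)]=\EX_Y^{z^n,x}[\ell(Y,\psi^*_{Y\mid Z^nX}(z^n,x))]$. Averaging this pointwise inequality over $\hat h\sim P_{\hat h}^{z^n}$, then over $X\mid Z^n=z^n$, then over $Z^n$, and using the tower property gives $\EX[\ell(Y,\hat h(X))]\ge R_\ell(Y\mid Z^n,X)$. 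Subtracting $R_\ell(Y\mid W,X)$ yields $\EX[d(W,\hat h)]\ge \mathrm{MER}_\ell^n$, and taking the infimum over $P_{\hat h}^{Z^n}$ completes the lower bound (and shows the infimum is attained, by the predictor used in the upper bound).

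The step I expect to require the most care is the bookkeeping of the joint distribution $P_{W,Z^n,X,Y,\hat h}=P_W\otimes P_{XY}^W\otimes P_{Z^n}^W\otimes P_{\hat h}^{Z^n}$: one must check that the conditional-independence relations used in the averaging step ($(X,Y)\perp\hat h\mid Z^n$, and $Y\perp\hat h\mid (Z^n,X)$) are exactly those supplied by the generative model, and that the mild measurability/attainment assumptions stated in Section~\ref{sec:notation} guarantee that $\psi^*_{Y\mid Z^nX}$ exists and is a valid measurable predictor so that the pointwise Bayes bound can be integrated. Everything else is routine.
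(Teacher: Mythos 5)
Your proposal is correct and follows essentially the same route as the paper's proof: the data-processing observation that $R\ge I(W;Z^n)$ renders the rate constraint vacuous, the deterministic Bayes predictor $z^n\mapsto\psi^*_{Y\mid Z^nX}(z^n,\cdot)$ giving the $\le$ direction via Eq.~\eqref{eq:expected_distortion_eq_MER}, and the pointwise Bayes-optimality of $\psi^*_{Y\mid Z^nX}$ (after peeling off the $\hat h$-independent term $R_\ell(Y\mid W,X)$ and using $Y\perp\hat h\mid (Z^n,X)$) for the $\ge$ direction. Your write-up simply makes the conditional-independence bookkeeping more explicit than the paper does.
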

Note that since a Markov chain $W - Z^n - \hat{h}$ holds, having $R\ge I(W;Z^n)$ actually removes the constraint on optimization problem (\ref{eq:rate_distortion_base}). Thus, Eq.~(\ref{eq:expected_distortion_eq_MER}) can be used to prove this theorem.

We have seen that $I(W;Z^n)$ appeared in an upper bound on MER in Eq. (\ref{eq:MER_bound_MI_W_dataset}). Combining this fact and Theorem~\ref{theorem:rate_distortion_eq_MER}, for a bounded loss we have
\begin{equation}
\label{eq:D_n_eq_MER}
D_n(I(W;Z^n)) = \text{MER}_\ell^n \le \sqrt{\frac{b^2}{2n}I(W;Z^n)}.
\end{equation}

\subsection{Lower Bound}
\label{sec:lower_bound}
As discussed in Section \ref{sec:appetizer}, to have a standard rate-distortion problem, one can remove the constraint that $\hat{h}$ is generated only using the samples $Z^n$; i.e.
\begin{align}
\label{eq:rate_distortion_lower_bound}
    D^L(R)= \inf_{P_{\hat{h}}^W} \EX[d(W,\hat{h})],
\\
\mathrm{s.t.}\; I(W;\hat{h})\le R. \nonumber
\end{align}
Note that since the feasible set is enlarged, the solution to this minimization will be a lower bound on the optimization of (\ref{eq:rate_distortion_base}):
\begin{equation}
    \label{eq:D_L-LowerBounds-D_n}
    \forall R,\;  \forall n;\;\; D^L(R) \le D_n(R).
\end{equation}
Function $D^L(R)$ is much easier to study than $D_n(R)$, since the corresponding optimization problem \eqref{eq:rate_distortion_lower_bound} is independent of $n$.

In the next sections, we will first derive an upper bound on $D_n(R)$. Then by studying the gap between the upper and lower bounds, we shed light on the behavior of $D_n(R)$.

\subsection{Upper Bound}
\label{sec:upper_bound}
To define the upper bound, we add another constraint to the optimization problem (\ref{eq:rate_distortion_base}): the mutual information between the dataset and the learned model $\hat{h}$ should also be constrained by $R$. More precisely, we define 
\begin{align}
\label{eq:rate_distortion_upper_bound}
    D_n^U(R)= \inf_{P_{\hat{h}}^{Z^n}} \EX[d(W,\hat{h})],
\\
\mathrm{s.t.}\; I(Z^n;\hat{h})\le R. \nonumber
\end{align}
Note that the constraint in (\ref{eq:rate_distortion_upper_bound}) is more strict than the constraint in  (\ref{eq:rate_distortion_base}), since by data processing inequality we have
$$I(Z^n;\hat{h})\le R \Longrightarrow I(W;\hat{h})\le R.$$
Thus, we can write
\begin{equation}
\forall R, \; \forall n; D_n(R) \le D_n^U(R).
\end{equation}

This rate-distortion problem is of interest by itself. Note that an increasingly popular approach in controlling the generalization gap in frequentist setting by information-theoretic tools, is to guarantee that mutual information between dataset and the model is small \cite{XuInformationtheoreticanalysisgeneralization2017,RussoHowmuchdoes2015,RussoControllingbiasadaptive2016, BassilyLearnersthatUse2018}. 
To translate the frequentist setting to the Bayesian setting of our discussion, consider the same form of parametric learning in which the unknown distribution is assumed to be described by the parameter $w$. But no distribution is assumed on the value of $w$, and an algorithm should work for any $w$, in a minimax fashion. Thus, by bounding the mutual information, we mean that for all $w$, $I^w(Z^n;\hat{h})\le R$ and we have $I(Z^n; \hat{h}|W)=\EX_W[I^w(Z^n;\hat{h})]\le R$. On the other hand since there is the Markov chain $W-Z^n-\hat{h}$, we have $I(Z^n; \hat{h}|W)\le I(Z^n; \hat{h})$. Therefore, understanding the effect of the constraint $I(Z^n; \hat{h})\le R$ in the Bayesian setting could be illuminative also for the frequentist setting. In particular, if 
$I(Z^n; \hat{h})\le R$ is satisfied for all $P_W$, we have $I^w(Z^n; \hat{h})\le R; \forall w \in \mathcal{W}$. %

A natural question that should be studied is whether the equality $D_n(I(W;Z^n))\stackrel{?}{=}D^U_n(I(W;Z^n))$ holds. The informal rational behind this question is as follows: 
Intuitively, if we know that only $R=I(W;Z^n)$ nats of information about $W$ is present in the dataset $Z^n$,  it should be possible to just extract those nats without relying more on the dataset. 
Unfortunately, this equality does not hold in general. Actually, often an unbounded $I(\hat{h};Z^n)$ is needed in order to achieve $D_n(R)$. To see this, consider the simple problem where $W\distas{} \mathcal{N}(0,1)$, $Z^n={(Y_i)}_{i=1}^n$, $Y_i\distas{} \mathcal{N}(W,1)$, and $\ell(y,\hat{y})=(y-\hat{y})^2$. In this case, it is easy to verify that there is a unique optimal Bayes decision rule $\psi^*_{Y|Z^n}(z^n)$ (expected value of the posterior distribution of $W$ given $Z^n$), which is a deterministic function of $Z^n$. Thus, while $I(W;Z^n)$ is finite (as we know from well-known results on Gaussian channels), $I(Z^n;\hat{h})$ should be infinite to achieve the best performance.

Despite this unsatisfactory observation, it is actually possible to do quite well with a limited rate if we don't persist in using exactly the optimal decision rule.
This is made precise in the next theorem.
\begin{theorem}
\label{theorem:DUn_bound}
For any bounded loss function $\ell:\mathcal{Y}\times \mathcal{Y} \to [0,b]$, and for all $n \ge 1$, we have 
\begin{align}
\label{eq:D_U_n_eq_MER}
D^U_n(I(W;Z^n)) 
& \le \sqrt{\frac{b^2}{2}I(W;Y|Z^n,X)}\\
&\le \sqrt{\frac{b^2}{2n}I(W;Z^n)}.
\end{align}
\end{theorem}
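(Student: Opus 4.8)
The plan is to treat the two inequalities separately. The second one, $\sqrt{\tfrac{b^2}{2}I(W;Y|Z^n,X)}\le\sqrt{\tfrac{b^2}{2n}I(W;Z^n)}$, is nothing but the estimate $I(W;Y|Z^n,X)\le\tfrac1n I(W;Z^n)$ already used to pass from \eqref{eq:MER_bound_conditional} to \eqref{eq:MER_bound_MI_W_dataset}: under the standing assumption $X\independent W$ one has $I(W;Y|Z^n,X)=I(W;Z|Z^n)$, and $I(W;Z|Z^n)$ is non-increasing in $n$, so it is at most the average of $I(W;Z_i|Z^{i-1})$, $i=1,\dots,n$, whose sum is $I(W;Z^n)$ by the chain rule. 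It therefore remains to establish the first inequality, and for that it suffices --- since $D^U_n$ is defined by an infimum --- to exhibit one channel $P_{\hat h}^{Z^n}$ that satisfies the constraint $I(Z^n;\hat h)\le I(W;Z^n)$ of \eqref{eq:rate_distortion_upper_bound} and whose induced distortion is at most $\sqrt{\tfrac{b^2}{2}I(W;Y|Z^n,X)}$.

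The channel I would use is the \emph{posterior-sampling predictor}: given $Z^n$, draw an independent $W'\sim P_W^{Z^n}$ (so that $W'\independent W\mid Z^n$ and $W'$ has the same conditional law as $W$ given $Z^n$) and output $\hat h=h^*_{W'}$. This is a legitimate choice of $P_{\hat h}^{Z^n}$. For feasibility, observe that $(Z^n,W')$ and $(Z^n,W)$ have the same joint distribution (same marginal of $Z^n$, same posterior of the parameter), hence $I(Z^n;W')=I(Z^n;W)=I(W;Z^n)$; since $\hat h$ is a deterministic function of $W'$, the data-processing inequality yields $I(Z^n;\hat h)\le I(Z^n;W')=I(W;Z^n)$, so the constraint of \eqref{eq:rate_distortion_upper_bound} at $R=I(W;Z^n)$ holds.

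To bound the distortion $\EX[d(W,\hat h)]$ I would condition on $Z^n=z^n$ and on the test input $X=x$ (all expectations in this paragraph are under that conditional law). Write $\mu=P_W^{z^n}$ for the posterior --- which, by $X\independent W$, is also the conditional law of $W$ given $(z^n,x)$ --- and $Q=P_Y^{z^n,x}=\EX_{W\sim\mu}[P_Y^{W,x}]$ for the posterior predictive of $Y$. Under the conditional law, $W$ and $W'$ are i.i.d.\ $\mu$, $Y\mid W\sim P_Y^{W,x}$, and $W'\independent(W,Y)$; since $W'$ has the same marginal law $\mu$ as $W$, the conditional excess risk can be rewritten as
\begin{align*}
&\EX\big[\ell(Y,h^*_{W'}(x))-\ell(Y,h^*_{W}(x))\big]\\
&\qquad=\EX_{W'\sim\mu}\Big[\EX_{Y\sim Q}\ell(Y,h^*_{W'}(x))-\EX_{Y\sim P_Y^{W',x}}\ell(Y,h^*_{W'}(x))\Big].
\end{align*}
Because $\ell$ takes values in $[0,b]$, each inner difference is at most $b$ times the total-variation distance between $Q$ and $P_Y^{W',x}$, hence at most $b\sqrt{\tfrac12\KLdiv{P_Y^{W',x}}{Q}}$ by Pinsker's inequality (this is the computation underlying Lemma~\ref{lemma:base_bound_for_Bayes_data_process_gap}). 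Averaging over $W'\sim\mu$, using concavity of the square root and the identity $\EX_{W'\sim\mu}\KLdiv{P_Y^{W',x}}{Q}=I^{z^n,x}(W;Y)$ (mutual information as the average KL divergence to the mixture, valid here because $Y\independent Z^n\mid W,X$), the conditional excess risk is at most $b\sqrt{\tfrac12 I^{z^n,x}(W;Y)}$. Taking expectation over $(Z^n,X)$ and invoking concavity of the square root once more gives $\EX[d(W,\hat h)]\le\sqrt{\tfrac{b^2}{2}I(W;Y|Z^n,X)}$, and $D^U_n(I(W;Z^n))\le\EX[d(W,\hat h)]$ completes the argument.

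There is no single hard step; the care lies entirely in the bookkeeping. Feasibility hinges on the exchangeability $(Z^n,W')\overset{d}{=}(Z^n,W)$ together with measurability of $w\mapsto h^*_w$, which the standing assumption that the Bayes infimum is attained should provide. The rewriting of the excess risk --- both the swap of the roles of $W$ and $W'$ and the reduction of the averaged KL term to a conditional mutual information --- must be done with the conditional independencies $Y\independent Z^n\mid W,X$ and $X\independent W$ in hand, since these are exactly what identify $\EX_{W'\sim\mu}\KLdiv{P_Y^{W',x}}{Q}$ with $I^{z^n,x}(W;Y)$ and, after averaging, with $I(W;Y|Z^n,X)$. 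Finally one should note that $d(w,\hat h)\ge0$ always, because $h^*_w$ is optimal for the parameter $w$; this guarantees the displayed bound is genuine rather than vacuous.
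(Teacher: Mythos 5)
Your proposal is correct and takes essentially the same route as the paper: the same posterior-sampling predictor $\hat h = h^*_{W'}$ with $W'\sim P_W^{Z^n}$, the same Markov-chain/data-processing argument for feasibility of the constraint $I(Z^n;\hat h)\le I(W;Z^n)$, and the same $\sqrt{\tfrac{b^2}{2}I(W;Y|Z^n,X)}$ distortion bound. The only presentational difference is that you re-derive that bound from Pinsker's inequality and the average-KL characterization of mutual information, whereas the paper invokes Lemma~\ref{lemma:base_bound_for_Bayes_data_process_gap} (whose own proof is this very computation).
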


\subsection{Relation between Lower and Upper Bounds}
\label{sec:relation_between_lower_and_upper_bounds}

The following theorem states the relation between the upper bound $D^U_n(R)$ and the lower bound $D_L(R)$.
\begin{theorem}
\label{theorem:bound_on_difference_of_DUn_and_DL}
For any bounded loss $\ell:\mathcal{Y}\times\mathcal{Y}\to [0,b]$, we have
\begin{equation}
D^U_n(R) \le D^L(R) + \sqrt{\frac{b^2}{2} I(W;\hat{h}_R|Z^n)},
\end{equation}
where the mutual information is based on the distribution $P_{W,\hat{h}_R Z^n}=P_{W}\otimes P^{*W}_{\hat{h}_R}\otimes P_{Z^n}^W$ and $P^{*W}_{\hat{h}_R}$ is a solution to the optimization of $D^L(R)$.
\end{theorem}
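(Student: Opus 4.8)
The idea is to exhibit an explicit feasible point for the optimization defining $D^U_n(R)$ whose distortion exceeds $D^L(R)$ by at most the claimed error term. Let $P^{*W}_{\hat h_R}$ be an optimizer of $D^L(R)$, and work throughout with the joint law $P_W\otimes P^{*W}_{\hat h_R}\otimes P_{Z^n}^W$; under it $\E[d(W,\hat h_R)]=D^L(R)$, $I(W;\hat h_R)\le R$, and — crucially — $\hat h_R$ and $Z^n$ are conditionally independent given $W$, so the Markov chain $\hat h_R - W - Z^n$ holds. From this joint law I would extract the (regular) conditional kernel $P_{\hat h_R\mid Z^n}$ and use it to define a randomized decision rule $\hat h'$ for the upper-bound problem: given $Z^n=z^n$, draw $\hat h'\sim P_{\hat h_R\mid Z^n=z^n}$. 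Feasibility is the first thing to check: under $P_W\otimes P_{Z^n}^W\otimes P_{\hat h'\mid Z^n}$ the pair $(Z^n,\hat h')$ has the same law as $(Z^n,\hat h_R)$ in the original joint, so by the data-processing inequality along $\hat h_R - W - Z^n$ we get $I(Z^n;\hat h')=I(Z^n;\hat h_R)\le I(W;\hat h_R)\le R$. Hence $\hat h'$ is admissible for $D^U_n(R)$ and $D^U_n(R)\le\E[d(W,\hat h')]$.

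Next I would bound the distortion gap $\E[d(W,\hat h')]-D^L(R)$. Both expectations are over the same marginal $P_{W,Z^n}=P_W\otimes P_{Z^n}^W$; they differ only in how $\hat h$ is generated given $(W,Z^n)$: from $P_{\hat h_R\mid Z^n}$ in the first case, and from $P^{*W}_{\hat h_R}=P_{\hat h_R\mid W}$ (independent of $Z^n$ by the Markov chain) in the second. Fixing $(w,z^n)$ and using that $d(w,\cdot)$ takes values in $[0,b]$ — it is nonnegative since $h^*_w$ is Bayes-optimal for $P_{XY}^w$, and at most $b$ since $\ell\in[0,b]$ — the difference of the two conditional expectations of $d(w,\cdot)$ is at most $b\,\norm{P_{\hat h_R\mid Z^n=z^n}-P_{\hat h_R\mid W=w}}_{\mathrm{TV}}$. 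Taking expectation over $(W,Z^n)$ gives $\E[d(W,\hat h')]-D^L(R)\le b\,\E_{W,Z^n}\!\big[\norm{P_{\hat h_R\mid Z^n}-P_{\hat h_R\mid W}}_{\mathrm{TV}}\big]$.

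Finally I would convert this to conditional mutual information. Applying Pinsker, $\norm{P_{\hat h_R\mid Z^n=z^n}-P_{\hat h_R\mid W=w}}_{\mathrm{TV}}\le\sqrt{\tfrac12\KLdiv{P_{\hat h_R\mid W=w}}{P_{\hat h_R\mid Z^n=z^n}}}$, then Jensen (concavity of $\sqrt{\cdot}$) to pull $\E_{W,Z^n}$ inside the root. The averaged divergence $\E_{W,Z^n}\!\big[\KLdiv{P_{\hat h_R\mid W}}{P_{\hat h_R\mid Z^n}}\big]$ is exactly $I(W;\hat h_R\mid Z^n)$: indeed $I(W;\hat h_R\mid Z^n)=\E_{Z^n}\E_{W\mid Z^n}\big[\KLdiv{P_{\hat h_R\mid W,Z^n}}{P_{\hat h_R\mid Z^n}}\big]$, and $P_{\hat h_R\mid W,Z^n}=P_{\hat h_R\mid W}$ by the Markov chain. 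Chaining the three estimates yields $D^U_n(R)\le\E[d(W,\hat h')]\le D^L(R)+b\sqrt{\tfrac12 I(W;\hat h_R\mid Z^n)}$, which is the claim.

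\textbf{Main obstacle.} The routine parts are Pinsker, Jensen, and the $[0,b]$-boundedness of $d$. The one genuine design decision — and the place where care is needed — is choosing the surrogate rule to be $P_{\hat h_R\mid Z^n}$ and then seeing that the averaged Pinsker bound collapses \emph{exactly} to the conditional mutual information in the statement; this requires getting the direction of the KL divergence right and matching the conditioning variables, which works only because of the Markov chain $\hat h_R - W - Z^n$. Existence of the regular conditional distributions used above is assumed, as elsewhere in the paper.
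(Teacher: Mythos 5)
Your proposal is correct, and your feasible point is in fact the same one the paper uses: the paper samples $W'\sim P_W^{Z^n}$ and then $\hat h'_R\sim P^{*W'}_{\hat h_R}$, which conditional on $Z^n=z^n$ produces exactly your kernel $P_{\hat h_R\mid Z^n=z^n}$, and the data-processing feasibility check is essentially identical. Where you genuinely diverge is the concentration step. The paper conditions only on $z^n$, notes that under $P^{z^n}$ the surrogate $\hat h'_R$ is independent of $W$ while $d(W,\hat h_R)\in[0,b]$ is $(b/2)$-subgaussian under the product law, and invokes the Xu--Raginsky decoupling lemma (Lemma A.1) to obtain $\E^{z^n}[d(W,\hat h'_R)]-\E^{z^n}[d(W,\hat h_R)]\le\sqrt{\tfrac{b^2}{2}I^{z^n}(W;\hat h_R)}$, followed by Jensen over $Z^n$. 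You instead condition on both $(w,z^n)$, bound the difference of conditional means of the $[0,b]$-valued function $d(w,\cdot)$ by $b$ times the total variation between $P_{\hat h_R\mid Z^n=z^n}$ and $P_{\hat h_R\mid W=w}$, and then pass through Pinsker and Jensen, using the Markov chain $\hat h_R - W - Z^n$ to identify $\E_{W,Z^n}\bigl[\KLdiv{P_{\hat h_R\mid W}}{P_{\hat h_R\mid Z^n}}\bigr]$ with $I(W;\hat h_R\mid Z^n)$. Both routes produce the same constant $\sqrt{b^2/2}$; yours is somewhat more elementary (no appeal to subgaussianity or the Donsker--Varadhan machinery behind Lemma A.1) and makes the role of the Markov structure transparent, whereas the paper's phrasing lets it reuse one decoupling lemma uniformly across its bounds.
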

This theorem states that to understand the difference between $D^L(R)$ and $D_n^U(R)$, one can solve the optimization associated to $D^L(R)$ to find $P_{\hat{h}_R}^{*W}$. Then, the mutual information $I(W;\hat{h}_R|Z^n)$ controls the gap between the upper bound and lower bound. While this nonasymptotic bound provides an intuitive understanding of the interplay between $D^L(R)$ and $D^U_n(R)$ for all $n$ and $R$, it is hard to be evaluated. 
But as $n\to\infty$, if the posterior is concentrated to the true realization, it is reasonable to expect that $I(W;\hat{h}_R|Z^n)\to 0$ and all of the rate-distortion functions converge. This is made precise in the next theorem.

\begin{theorem}
\label{theorem:convergence_of_rate_distortion_functions}
Suppose the distortion $d(W,\hat{h})$ defined in Eq.~\eqref{eq:d-def} can be represented as a distance $d'(h_W^*, \hat{h})$. Let $W$ and $W'$ be two samples independently generated from $P_W^{Z^n}$. 
If we have $$\lim_{n\to \infty} \EX[d'(h_W^*,h_{W'}^*)] = 0,$$
then 
\begin{equation}
     \forall R\ge0;\; D^L(R)= \lim_{n\to\infty}D_n(R)=\lim_{n\to\infty}D^U_n(R).
\end{equation}
\end{theorem}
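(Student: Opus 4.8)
The plan is to sandwich $D_n(R)$ and $D^U_n(R)$ between $D^L(R)$ and $D^L(R)+o(1)$. We already have $D^L(R)\le D_n(R)\le D^U_n(R)$ for every $n$ and $R$ (Eq.~\eqref{eq:D_L-LowerBounds-D_n} together with the data-processing argument of Section~\ref{sec:upper_bound}), so it suffices to show $\limsup_{n\to\infty} D^U_n(R)\le D^L(R)$ for each fixed $R\ge 0$. I would obtain this by constructing, for each $n$, a channel feasible for the optimization~\eqref{eq:rate_distortion_upper_bound} defining $D^U_n(R)$ whose distortion is at most $D^L(R)+o(1)$. Let $P^{*W}_{\hat{h}_R}$ be a solution (or, if the infimum is not attained, an $\epsilon$-optimal point) of~\eqref{eq:rate_distortion_lower_bound}. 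The candidate is the \emph{posterior-resampling} channel: given the training set $Z^n$, first draw a pseudo-parameter $W'\sim P_W^{Z^n}$ from the posterior, then draw $\hat{h}\sim P^{*W'}_{\hat{h}_R}$; marginalizing out $W'$ yields a legitimate conditional $P_{\hat{h}}^{Z^n}$.

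The first thing to verify is feasibility, i.e.\ $I(Z^n;\hat{h})\le R$. By construction $Z^n - W' - \hat{h}$ is a Markov chain, so $I(Z^n;\hat{h})\le I(W';\hat{h})$. Moreover, since averaging the posterior over $Z^n$ recovers the prior, $\E_{Z^n}[P_W^{Z^n}]=P_W$, the pair $(W',\hat{h})$ has exactly the joint law $P_W\otimes P^{*W}_{\hat{h}_R}$ appearing in~\eqref{eq:rate_distortion_lower_bound}; hence $I(W';\hat{h})\le R$ and the candidate is feasible for $D^U_n(R)$.

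Next I would bound the distortion of this channel using the hypothesis $d(W,\hat{h})=d'(h_W^*,\hat{h})$ for a genuine distance $d'$. By the triangle inequality, $d'(h_W^*,\hat{h})\le d'(h_W^*,h_{W'}^*)+d'(h_{W'}^*,\hat{h})$. Taking expectations under the full joint: the second term is $\E[d(W',\hat{h})]$, which equals $D^L(R)$ up to the $\epsilon$ from near-optimality, because $(W',\hat{h})$ carries the $D^L$-joint; and for the first term, conditionally on $Z^n$ the true parameter $W$ and the resampled $W'$ are two i.i.d.\ draws from the posterior $P_W^{Z^n}$ --- precisely the configuration in the theorem's hypothesis --- so $\E[d'(h_W^*,h_{W'}^*)]\to 0$. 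Thus $D^U_n(R)\le \E[d(W,\hat{h})]\le D^L(R)+\E[d'(h_W^*,h_{W'}^*)]+\epsilon$; letting $n\to\infty$ and then $\epsilon\to 0$ closes the sandwich.

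The step needing the most care is the feasibility chain $I(Z^n;\hat{h})\le I(W';\hat{h})\le R$: it rests both on the Markov structure of the construction (first resample from the posterior, then apply the fixed channel $P^{*W}_{\hat{h}_R}$) and on the identity $\E_{Z^n}[P_W^{Z^n}]=P_W$, which is what forces $(W',\hat{h})$ to inherit exactly the joint law --- and hence the rate --- of the $D^L$ optimum. Secondary technical points are handling non-attainment of the infimum in~\eqref{eq:rate_distortion_lower_bound} via $\epsilon$-optimal channels with $\epsilon\to 0$, and verifying that ``$d'$'' is really a metric so that the triangle inequality (and implicitly $D^L(R)<\infty$) is available. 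Note that, in contrast to Theorem~\ref{theorem:bound_on_difference_of_DUn_and_DL}, this argument never uses boundedness of $\ell$: all the work is done by the posterior-concentration hypothesis $\E[d'(h_W^*,h_{W'}^*)]\to 0$.
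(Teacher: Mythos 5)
Your proof is correct, and it rests on the same two pillars as the paper's: the posterior-resampling channel $Z^n \to W'\sim P_W^{Z^n} \to \hat h\sim P^{*W'}_{\hat h_R}$, and the metric decomposition $d'(h_W^*,\hat h)\le d'(h_W^*,h_{W'}^*)+d'(h_{W'}^*,\hat h)$ with the posterior-concentration hypothesis killing the first term. The routes diverge after that. The paper first passes to the inverse functions $R^L(D)$, $R^U_n(D)$, proves the non-asymptotic intermediate Theorem~\ref{theorem:bound_on_difference_of_RUn_and_RL} (which records the extra slack term $-I(W;\hat h_D\mid Z^n)$), and then, to convert back to the $D(R)$ statement, shifts the distortion argument from $D'$ to $D'-\delta_n$; this forces an appeal to the continuity of $R^L$ for $D>d_{\min}$ and a separate monotone-convergence argument at the boundary point $D=d_{\min}$. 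You stay in the $D(R)$ parameterization throughout and hold the rate constraint $R$ fixed, so no argument of $D^L$ ever moves: you get $D^U_n(R)\le D^L(R)+\delta_n+\epsilon$ outright, and the sandwich closes with no continuity or boundary-case bookkeeping. What you lose relative to the paper is the explicit non-asymptotic gap bound $R^U_n(D')\le R^L(D)-I(W;\hat h_D\mid Z^n)$, which the authors evidently wanted on record (it parallels Theorem~\ref{theorem:bound_on_difference_of_DUn_and_DL}); what you gain is a shorter proof of the convergence theorem itself that sidesteps the subtlety that $D^L(R)=d$ only gives $R^L(d)\le R$, not equality, when the rate-distortion curve has flat pieces. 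Your closing observation that boundedness of $\ell$ is never used here, unlike in Theorem~\ref{theorem:bound_on_difference_of_DUn_and_DL}, is also accurate and worth noting.
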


Note that the condition $\lim_{n\to \infty} \EX[d'(h_W^*,h_{W'}^*)] = 0$ is usually satisfied as a result of the convergence of the posterior distribution.
In Section \ref{sec:applications}, we will see cases for which the distortion can be represented as a distance.

In Figure~\ref{fig:rate_distortion_functions}, the relation between all the introduced rate-distortion functions is presented. This figure also summarizes some of the presented results.
\begin{figure}[h!]
    \centering
    \includegraphics[scale=0.95]{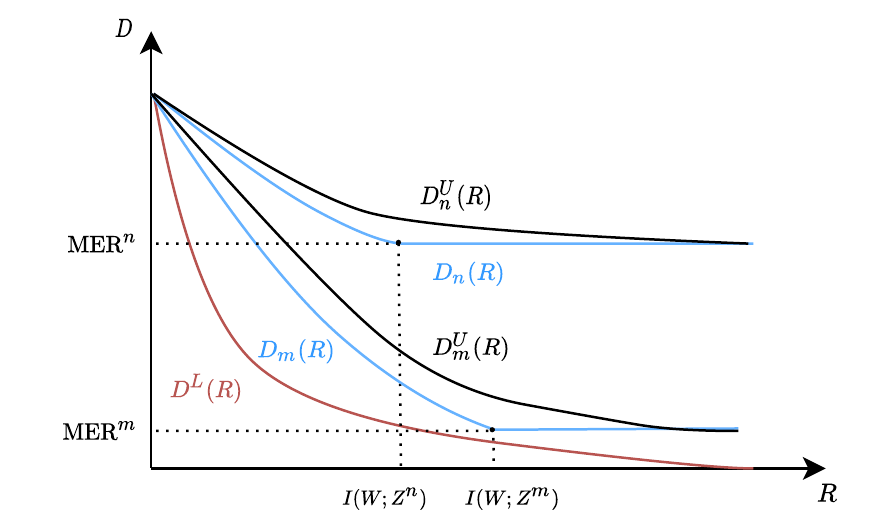}
    \caption{A schematic view of the relation between rate-distortion functions studied in this paper. The original rate-distortion function $D_n(R)$ and its upper bound $D_n^U(R)$ are presented for two sample sizes $n$ and $m$, where $n<m$. The lower bound $D^L(R)$ is also illustrated. As discussed in Theorem~\ref{theorem:rate_distortion_eq_MER}, $D_n(R)$ is equal to $\mathrm{MER}^n$ for $R\ge I(W;Z^n)$. Also note that the upper bound approaches $\mathrm{MER}^n$ as $R\to\infty$. Both $D_n(R)$ and $D_n^U(R)$ approach the $D(R)$ as $n\to\infty$.}
    \label{fig:rate_distortion_functions}
\end{figure}

\section{Applications}
\label{sec:applications}

In the framework developed in previous sections, the distortion function $d(W,\hat{h})$ has a quite general form: it measure the distortion between a variable $W$ and a function $\hat{h}$.  In practice it is usually easier to represent the problem in a way that both input and output are elements of a shared metric space and the distance of that space is the distortion measure. 

In order to achieve this, we first need a lemma which allows us to reformulate the rate-distortion functions.
\begin{lemma}{(Reparameterization Lemma)}
	\label{lemma:reparam}
	Let $d: \mathcal{X} \times \hat{\mathcal{X}}\to \mathbb{R}$ be a distortion function. Assume that there exists mappings $f: \mathcal{X} \to \mathcal{V}$ and $g: \hat{\mathcal{X}} \to \hat{\mathcal{V}}$, and a distortion function $d': \mathcal{V} \times \hat{\mathcal{V}} \to \mathbb{R}$, such that for all $x \in \mathcal{X}$, $\hat{x}\in \hat{\mathcal{X}}$, we have $d(x, \hat{x}) = d'(f(x), g(\hat{x}))$. If $\hat{\mathcal{V}} = f(\hat{\mathcal{X}})$, it follows that
	\begin{align*}
	&\min_{P_{\hat X}^X}\;\; \E_{X, \hat X}\;\big[ d(X, \hat X)\big] \; = &&\min_{P_{\hat V}^V}\;\; \E_{V, \hat V}\;\big[ d'(V, \hat V)\big],
	\\
	&\mathrm{s.t.}\;\;   I(X; \hat X)  \leq R \nonumber && \mathrm{s.t.}\;\;   I(V; \hat V)  \leq R \nonumber
	\end{align*}
	where the second minimization is the rate-distortion function for random variable $V=f(X)$.
\end{lemma}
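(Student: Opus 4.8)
The plan is to prove the claimed identity by establishing the two inequalities $\mathrm{RHS}\le\mathrm{LHS}$ and $\mathrm{LHS}\le\mathrm{RHS}$ separately, in each case transporting a feasible channel from one problem to the other and invoking the data processing inequality for the rate constraint, plus a measurable right-inverse of $g$ for the harder direction. Throughout, write $V=f(X)$ with input marginal the pushforward of $P_X$ under $f$, which is exactly the source law appearing on the right-hand side.

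For $\mathrm{RHS}\le\mathrm{LHS}$, I would fix any feasible $P_{\hat X}^X$ for the left problem (so $I(X;\hat X)\le R$), set $V=f(X)$, $\hat V=g(\hat X)$, and let $P_{\hat V}^V$ be the conditional extracted from the induced joint $P_{V\hat V}$. Since $V$ is a deterministic function of $X$ and $\hat V$ a deterministic function of $\hat X$, the chain $V-X-\hat X-\hat V$ is Markov, so $I(V;\hat V)\le I(X;\hat X)\le R$ and $P_{\hat V}^V$ is feasible. By the hypothesis $d(x,\hat x)=d'(f(x),g(\hat x))$ we get $\E[d'(V,\hat V)]=\E[d(X,\hat X)]$, so the achieved distortions coincide; taking the infimum over feasible $P_{\hat X}^X$ yields $\mathrm{RHS}\le\mathrm{LHS}$.

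For the reverse inequality $\mathrm{LHS}\le\mathrm{RHS}$, I would first use the hypothesis $\hat{\mathcal V}=f(\hat{\mathcal X})$ — equivalently, that $g$ maps onto $\hat{\mathcal V}$ — to pick a measurable section $s:\hat{\mathcal V}\to\hat{\mathcal X}$ with $g\circ s=\mathrm{id}_{\hat{\mathcal V}}$. Given any feasible $P_{\hat V}^V$ for the right problem, define a channel for the left one by composition: from $X$ form $V=f(X)$, draw $\hat V\sim P_{\hat V}^V$, and output $\hat X=s(\hat V)$. Then $g(\hat X)=\hat V$ and $f(X)=V$, so $d(X,\hat X)=d'(V,\hat V)$ and the expected distortions agree. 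For the rate, $X-V-\hat V-\hat X$ is Markov, so $I(X;\hat X)\le I(X;\hat V)\le I(V;\hat V)\le R$, giving feasibility. Infimizing over $P_{\hat V}^V$ yields $\mathrm{LHS}\le\mathrm{RHS}$, and combining the two bounds proves the lemma. (Since the statement uses $\min$, one either assumes the infima are attained, as the paper does for Bayes risk, or runs the argument with $\varepsilon$-optimal channels.)

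The main obstacle is the existence of the measurable section $s$ of $g$, and relatedly the exact reading of the alphabet condition: without regularity on $g$ and the underlying spaces a measurable selection need not exist, and the reverse direction can then fail. In the intended applications $g$ is essentially the identity and $f:w\mapsto h_w^*$, so $s$ is trivial; for the lemma as stated one would lean on a standing standard-Borel assumption together with a measurable selection theorem, or restrict to $g$ injective on the relevant set so that $s$ is unambiguous. Everything else — pushing joint laws forward, verifying the two Markov chains, and matching the expected distortions via $d=d'\circ(f,g)$ — is routine.
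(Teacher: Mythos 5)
Your proof is correct and follows essentially the same two-sided feasibility-transport argument as the paper: direction one pushes a feasible channel forward through $(f,g)$ and applies data processing, and direction two pulls a feasible channel back via a right inverse of $g$ (the paper calls it $g^\dagger$) and applies data processing again. Your version is in fact a touch cleaner than the paper's in the reverse direction (you go directly through $X\to V\to\hat V\to\hat X$ without the superfluous $f^\dagger$), and your remark about needing a measurable section of $g$ — hence implicit standard-Borel-type regularity — is a real subtlety that the paper silently elides.
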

When the reparameterization is applicable, we might abuse the notation and write $d(V, \hat V)$ instead of $d'(V, \hat V)$.

For quadratic loss, the reparameterization lemma can be used to represent $d(W,\hat{h})$ as a norm on a suitable function space.
Let $\mathcal{Y} \subseteq \mathbb{R}$ and consider $l(y, \hat{y}) = (y - \hat{y})^2$, for all $y, \hat{y}, \in \mathcal{Y}$.  Based on Equation \eqref{eq:d-def},  we have
\begin{align*}
    d(w, \hat{h}) &= \E_{XY}^w\Big[\big|Y- h_w^*(X)\big|^2  - \big|Y - \hat{h}(X)\big|^2\Big]\\
    &= \E_X^w \Big[\big| h_w^*(X) - \hat{h}(X) \big|^2\Big],
\end{align*}
which is the norm of $L^2(P_X)$. Thus using the reparameterization lemma, the rate distortion problems can be restated for the distortion function 
$d(h^*_w, \hat{h})=||h^*_w- \hat{h}||_{L^2(P_X)}$. 

It would be helpful if we could represent the distortion function by a distance on the parameter space, but this is not always possible. To be precise, define the hypothesis class 
\begin{equation}
    \label{eq:hypothesis_class}
    \mathcal{H} = \{h_w(.) = \psi^*_{Y|WX}(w, \cdot) | w\in \mathcal{W} \}
\end{equation}
where $\mathcal{W}$ is the set of all possible $W$s. Note that the optimal function learned from the samples $z^n$, $\psi^*_{Y|Z^nX}(z^n, .)$, does not necessarily lie in $\mathcal{H}$. In other words it might not be parameterizable using $W$. %
But it might still be possible to derive lower bounds %
by projecting on the set $\mathcal{H}$.

Assume that $\mathcal{H}$ is a convex subset of the Hilbert space $L^2(P_X)$. For a given $f \in L^2(P_X)$, define $\mathrm{proj}_\mathcal{H}(f)$ as the projection of $f$ on the convex set $\mathcal{H}\subseteq L^2(P_X)$. As a result of the contraction property of projections on convex sets in Hilbert spaces, we have $d(\mathrm{proj}_\mathcal{H}(\hat{h}), h_w^*) \leq d(\hat{h}, h_w^*)$. Therefore,
\begin{align*}
    D_L(R) \geq\;\; &\min_{P_{\hat{h}}^W}\;\; \E\;\Big[ d\big(h^*_W, \mathrm{proj}_\mathcal{H}(\hat{h})\big)\Big],
    \\
	&\mathrm{s.t.}\;\;   I\big(h_W^*; \mathrm{proj}_\mathcal{H}(\hat{h})\big)  \leq R.
\end{align*}
By the application of Lemma \ref{lemma:reparam}, we arrive at the following lower bound
\begin{align*}
    D_L(R) \geq\;\; &\min_{P_{\hat{W}}^W}\;\; \E\;\Big[ d'\big(W, \hat{W})\big)\Big].
    \\
	&\mathrm{s.t.}\;\;   I\big(W; \hat{W}\big)  \leq R.
\end{align*}
in which  $d'(w, \hat{w}) = d(h^*_w, h_{\hat{w}})$ where $h_{\hat{w}} = \mathrm{proj}_\mathcal{H}(\hat{h})$. This process of projection and reparameterization is summarized in Fig.~\ref{fig:reparam}.
    
\begin{figure}[h!]
    \centering
    \includegraphics[scale=0.75]{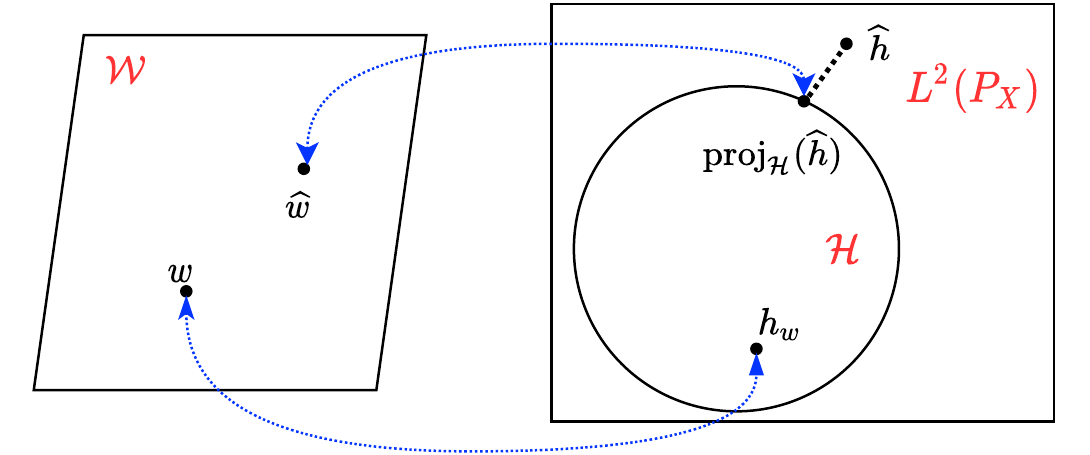}
    \caption{Projection to parametric family of functions.
    In the left, we have the set of all parameters $\mathcal{W}$ and in the right, the set of all functions in $L^2(P_X)$. $\mathcal{H}$ is the set of functions which are associated to a parameter in $\mathcal{W}$. For a function $\hat{h}$ which is not in $\mathcal{H}$, first it is projected to $\mathcal{H}$ using the $L^2(P_X)$ geometry, and the parameter $\hat{w}$ for $\mathrm{proj}_\mathcal{H}(\hat{h})$ is used. Then the reparameterization lemma can be used to restate this refined problem in $\mathcal{W}$.
    }
    \label{fig:reparam}
\end{figure}

Based on Theorem \ref{theorem:rate_distortion_eq_MER} and Equation \eqref{eq:D_L-LowerBounds-D_n}, we know that $\text{MER}_\ell^n \geq D^L(I(W; Z^n))$.  Under the conditions that the space $\mathcal{W}$ is finite-dimensional, and that some regularity conditions on $P_{Z}^w$ hold (see Lemma \ref{lemma:fisher}), we can use this fact and the following  lemma to lower bound MER.

\begin{lemma}(Shannon Lower Bound \cite{shannon1959coding})
    \label{SLB}
	Let $W$ and $\hat{W}$ be random variables taking values in $\R^p$,   $||.||$ be an arbitrary norm on $\R^p$, and $r$ be a positive real number. For any $D\geq 0$, define the rate-distortion function 
	\begin{align*}
    \label{eq:rate_distortion_upper_bound}
    R(D)&=\inf_{P_{\hat W}^W} I(W; \hat W),
    \\
    &\mathrm{s.t.}\;\; \E_{W, \hat W} ||W - \hat W||^r \leq D. \nonumber
    \end{align*}
	We have 
	\begin{equation*}
	R(D) \geq h(W) - \log\Big(V_p \big(\frac{Dre}{p}\Big)^{\frac{p}{r}} \Gamma\big(1+\frac{p}{r}\big)\big),
	\end{equation*}
	in which $h(W)$ is the differential entropy of $W$ and 
	 $V_p$ is the volume of $\{x \in \R^p: ||x||\leq 1\}$.
\end{lemma}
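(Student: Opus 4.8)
The plan is to follow the classical Shannon lower bound argument, adapting it from the Euclidean setting to a general norm on $\R^p$. First I would expand $I(W;\hat W) = h(W) - h(W\mid\hat W)$ and control the conditional differential entropy: for each fixed value $\hat w$, translation invariance of differential entropy gives $h(W\mid\hat W=\hat w) = h(W-\hat w\mid \hat W=\hat w)$, hence $h(W\mid\hat W) = h(W-\hat W\mid\hat W)\le h(W-\hat W)$ since conditioning does not increase differential entropy. Writing $N := W-\hat W$, it then remains to upper bound $h(N)$ subject to the single moment constraint $\E\norm{N}^r\le D$.

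The second step is a maximum-entropy comparison. Let $g_\lambda$ denote the \emph{generalized Gaussian} density on $\R^p$ proportional to $e^{-\lambda\norm{z}^r}$, with normalizing constant $c(\lambda)$. Nonnegativity of KL divergence gives $0\le\KLdiv{P_N}{g_\lambda} = -h(N) - \E_N[\log g_\lambda(N)] = -h(N) + \log c(\lambda)^{-1} + \lambda\,\E\norm{N}^r \le -h(N) + \log c(\lambda)^{-1} + \lambda D$, so that $h(N)\le \log c(\lambda)^{-1} + \lambda D$ for every $\lambda>0$; I would then pick the $\lambda$ minimizing the right-hand side.

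The third step evaluates $c(\lambda)$. The key fact is that the pushforward of Lebesgue measure on $\R^p$ under the map $z\mapsto\norm{z}$ has distribution function $t\mapsto\operatorname{vol}\{z:\norm{z}\le t\} = V_p t^p$, and hence density $pV_p t^{p-1}$ on $[0,\infty)$; therefore $\int_{\R^p}\phi(\norm{z})\,dz = pV_p\int_0^\infty\phi(t)\,t^{p-1}\,dt$ for any nonnegative $\phi$. Taking $\phi(t)=e^{-\lambda t^r}$ and substituting $u=\lambda t^r$ reduces the integral to a Gamma integral, giving $c(\lambda)^{-1} = V_p\,\lambda^{-p/r}\,\Gamma(1+p/r)$. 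Plugging in the optimal $\lambda = p/(rD)$ and using $\lambda D = p/r$ yields $h(N)\le\log\!\big(V_p(Dre/p)^{p/r}\Gamma(1+p/r)\big)$, and combining with the first step gives $I(W;\hat W)\ge h(W) - \log\!\big(V_p(Dre/p)^{p/r}\Gamma(1+p/r)\big)$, as claimed.

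The main obstacle is the third step: getting the ``polar coordinates for a general norm'' identity right and carrying out the Gamma-function bookkeeping without exponent or sign slips. The remaining ingredients — translation invariance of differential entropy, conditioning not increasing differential entropy, and nonnegativity of KL divergence — are standard. I would also note explicitly that the bound is only meaningful when $h(W)>-\infty$, and that the constraint $\E\norm{N}^r\le D<\infty$ is what legitimizes the comparison with $g_\lambda$.
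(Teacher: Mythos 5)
The paper states Lemma~\ref{SLB} as a cited classical result (attributed to \cite{shannon1959coding}) and provides no proof of its own, so there is nothing in the paper to compare against line by line. Your reconstruction is the standard maximum-entropy derivation of the Shannon lower bound, and it is correct: the decomposition $I(W;\hat W) = h(W) - h(W\mid\hat W)$ with $h(W\mid\hat W)\le h(W-\hat W)$, the KL-nonnegativity comparison against the generalized Gaussian $g_\lambda\propto e^{-\lambda\|z\|^r}$, the norm-ball coarea identity $\int\phi(\|z\|)\,dz = pV_p\int_0^\infty\phi(t)t^{p-1}\,dt$, the Gamma-function evaluation $c(\lambda)^{-1}=V_p\lambda^{-p/r}\Gamma(1+p/r)$, and the optimal choice $\lambda=p/(rD)$ all check out, and the exponent bookkeeping arrives at exactly the stated bound.

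One small point worth making explicit if this were to be written up: the first step silently uses that $h(W\mid\hat W)$ exists, i.e.\ that $W$ given $\hat W=\hat w$ has a density, which is not automatic for an arbitrary feasible $P_{\hat W}^W$. The standard fix is to note that if $W$ given $\hat W=\hat w$ is not absolutely continuous for a positive-measure set of $\hat w$, then $I(W;\hat W)=\infty$ and the inequality is trivial; otherwise the argument goes through. You also correctly flagged that the bound is vacuous when $h(W)=-\infty$. With those caveats stated, your proof is complete and matches the classical argument the paper is implicitly relying on.
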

It is known that Shannon Lower Bound is asymptotically tight as $D \to 0$ \cite{wu2020highdiminfo, koch2016shannon}.

Under the conditions of Lemma \ref{lemma:fisher}, we can prove the following lower bound for MER:

\begin{theorem}
	\label{theorem:rate}
	
	Let $\mathcal{W}$ be a $p$-dimensional compact subspace of $\R^p$, the hypothesis set defined in Eq. \eqref{eq:hypothesis_class} be convex, and assume that the regularity conditions of Lemma \ref{lemma:fisher} hold. If there exists a norm $||.||$ such that $||W - \hat{W} ||^2 \leq d'(W, \hat W)$, we have
	\begin{equation*}
	    \mathrm{MER}_\ell^n \geq \frac{p}{n} \cdot \frac{\pi}{\big(V_p \; \Gamma(1+\frac{p}{2})\big)^{\frac{2}{p}}}\exp\Big(-\frac{ \E \log |J_Z^W(W)|}{p} \Big),
	\end{equation*}
	as $n\to \infty$, where $V_p$ is the volume of $\{x \in \R^p: ||x||\leq 1\}$.
\end{theorem}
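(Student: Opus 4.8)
The plan is to assemble the reductions from Sections~\ref{sec:rd_analysis_of_MER} and~\ref{sec:applications} and then finish with the Shannon Lower Bound and Lemma~\ref{lemma:fisher}. First, by Theorem~\ref{theorem:rate_distortion_eq_MER} together with inequality~\eqref{eq:D_L-LowerBounds-D_n}, $\mathrm{MER}_\ell^n = D_n(I(W;Z^n)) \ge D^L(I(W;Z^n))$, so it suffices to lower bound $D^L$ at rate $R = I(W;Z^n)$. Next I would invoke the projection-plus-reparameterization argument developed just before Lemma~\ref{SLB}: since $\mathcal{H}$ is a convex subset of the Hilbert space $L^2(P_X)$, replacing $\hat h$ by its projection $\mathrm{proj}_{\mathcal H}(\hat h)$ can only decrease the distortion (contraction property of projections onto convex sets) while, being a deterministic post-processing of $\hat h$, not increasing $I(W;\hat h)$; Lemma~\ref{lemma:reparam} then turns the problem into a genuine finite-dimensional rate-distortion problem on $\mathcal{W}\subseteq\R^p$, giving $D^L(R) \ge \min\{\E[d'(W,\hat W)] : I(W;\hat W)\le R\}$.

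Using the hypothesis $\norm{W-\hat W}^2 \le d'(W,\hat W)$, any feasible $P_{\hat W}^W$ for this minimization with objective value at most $D$ also satisfies $\E\norm{W-\hat W}^2 \le D$, so the minimization is in turn lower bounded by the distortion-rate function for the norm $\norm{\cdot}$ raised to the power $r=2$, which is exactly the quantity controlled by Lemma~\ref{SLB}. That lemma gives $R(D) \ge h(W) - \log\!\big(V_p(2De/p)^{p/2}\Gamma(1+\tfrac p2)\big) =: \phi(D)$; since $\phi$ is strictly decreasing, inverting it (any $D$ with $\phi(D) > R$ cannot be achieved at rate $R$, because such a $P$ would force $R(D)\ge\phi(D)>R$) yields $D^L(R) \ge \tfrac{p}{2e}\,(V_p\Gamma(1+\tfrac p2))^{-2/p}\,e^{2(h(W)-R)/p}$.

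Finally I would substitute $R = I(W;Z^n)$ and expand it with Lemma~\ref{lemma:fisher}: $h(W) - I(W;Z^n) = -\tfrac p2\log\tfrac{n}{2\pi e} - \tfrac12\E[\log|J_Z^W(W)|] + o(1)$, so $e^{2(h(W)-R)/p} = \tfrac{2\pi e}{n}\exp\!\big(-\tfrac1p\E[\log|J_Z^W(W)|]\big)\,(1+o(1))$. Plugging this in, the factors $\tfrac{p}{2e}$ and $2\pi e$ combine to $p\pi$, and we recover exactly $\mathrm{MER}_\ell^n \ge \tfrac{p\pi}{n}(V_p\Gamma(1+\tfrac p2))^{-2/p}\exp(-\tfrac1p\E[\log|J_Z^W(W)|])$, with the $o(1)$ absorbed into the ``as $n\to\infty$'' claim.

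I expect the main obstacle to be the middle reduction: carefully verifying that the projection onto $\mathcal{H}$ followed by the reparameterization of Lemma~\ref{lemma:reparam} is legitimate here — in particular that $d(h^*_w,\hat h)$ genuinely factors through the parameter map so that the hypotheses of Lemma~\ref{lemma:reparam} (notably $\hat{\mathcal V} = f(\hat{\mathcal X})$) are met, and that the projected variable is still a feasible point of the optimization — together with the monotonicity bookkeeping needed to pass from the one-sided bound $R(D)\ge\phi(D)$ to an honest lower bound on the distortion-rate curve and to carry the $o(1)$ term of Lemma~\ref{lemma:fisher} through the exponential without it blowing up. The remaining computation, substituting Lemma~\ref{lemma:fisher} and simplifying the constants, is routine.
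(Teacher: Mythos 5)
Your proposal is correct and follows essentially the same route as the paper: reduce to $D^L(I(W;Z^n))$, project onto the convex hypothesis class and reparameterize onto $\mathcal{W}\subseteq\R^p$, compare with the quadratic-norm distortion to invoke the Shannon Lower Bound with $r=2$, invert to bound $D^L(R)$, and then substitute the asymptotic expansion of $I(W;Z^n)$ from Lemma~\ref{lemma:fisher}. The only cosmetic difference is that you spell out the projection/reparameterization step inside the proof, while the paper develops it in Section~\ref{sec:applications} and simply invokes Lemma~\ref{lemma:reparam} in the appendix; the inversion logic and constant bookkeeping match the paper's.
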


This theorem formalizes the intuition that a lower MER might be achieved in problems in which the expected Fisher information of $W$ contained in $Z$ is large. Note that this theorem implies  $\text{MER}_\ell^n = \Omega(1/n)$.

If there exists a constant $c$ such that for every $w$, $\big(J_{Z}^W(w)\big)_{ii} \leq c$, we can write
\begin{align*}
    \label{eq:bounded-fisher}
	\E_W \log|J_{Z}^W(W)| &\leq \E_W \log \prod_{i = 1}^{p} \big(J_{Z}^W(W)\big)_{ii} \leq p \log(c),
\end{align*} 
where the first inequality follows from Hadamard's inequality. Also note that given a positive-definite matrix $A$, the volume of the ellipsoid $\{x \in \R^p: ||x||_A\leq 1\}$ is given by $V_p = (\det A)^{-\frac{1}{2}} \frac{\pi^{p/2}}{\Gamma(1+\frac{p}{2})}$. Using these facts, the lower bound of $\Omega(p/n)$ can be obtained for MER,  as stated in the following corollary.

\begin{corollary}
\label{corollary:p-over-n}
Under the conditions of Theorem \ref{theorem:rate}, if $||\cdot|| = ||\cdot||_{A}$ ~ for some positive-definite matrix $A$, and by assuming  that for all $w$; $(J_{Z}^{W}(w))_{ii} \leq c$, we have
\begin{equation*}
    \mathrm{MER}_\ell^n \geq \frac{\gamma p}{n c} =  \Omega\Big(\frac{p}{n}\Big),
\end{equation*}
as $n\to \infty$, in which $\gamma$ is the smallest eigenvalue of $A$.
\end{corollary}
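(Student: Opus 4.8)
The plan is to start from the bound of Theorem~\ref{theorem:rate} and simplify the two problem-dependent quantities appearing there — the Fisher-information term $\exp(-\E\log|J_Z^W(W)|/p)$ and the unit-ball volume $V_p$ — under the extra hypotheses that the norm is the $A$-norm and that the diagonal Fisher entries are uniformly bounded by $c$. First I would handle the Fisher term: as already observed in the display preceding the corollary, Hadamard's inequality gives $|J_Z^W(w)| \le \prod_{i=1}^p (J_Z^W(w))_{ii} \le c^p$ for every $w$, hence $\E_W\log|J_Z^W(W)| \le p\log c$, and therefore $\exp\big(-\E\log|J_Z^W(W)|/p\big) \ge \exp(-\log c) = 1/c$. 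Plugging this into Theorem~\ref{theorem:rate} replaces that factor by $1/c$ (this is the direction we want, since the factor multiplies a positive quantity and we are lower-bounding).

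Next I would deal with $V_p$. For the $A$-norm, the unit ball $\{x : \|x\|_A \le 1\}$ is an ellipsoid with volume $V_p = (\det A)^{-1/2}\,\pi^{p/2}/\Gamma(1+p/2)$, as stated. Substituting this into the remaining prefactor $\pi/\big(V_p\,\Gamma(1+\tfrac p2)\big)^{2/p}$ from Theorem~\ref{theorem:rate}, the $\pi^{p/2}$ and $\Gamma(1+p/2)$ pieces cancel cleanly against the $\Gamma$ and $\pi$ already present, and one is left with $\pi \cdot \big((\det A)^{-1/2}\big)^{-2/p} \cdot \pi^{-1} = (\det A)^{1/p}$. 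So the bound of Theorem~\ref{theorem:rate} collapses to $\mathrm{MER}_\ell^n \ge \frac{p}{n}\cdot(\det A)^{1/p}\cdot\frac1c$. Finally, since $A$ is positive-definite with smallest eigenvalue $\gamma$, we have $\det A = \prod_i \lambda_i \ge \gamma^p$, so $(\det A)^{1/p} \ge \gamma$, giving $\mathrm{MER}_\ell^n \ge \gamma p/(nc) = \Omega(p/n)$ as $n\to\infty$, which is exactly the claim.

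The argument is essentially a bookkeeping exercise once Theorem~\ref{theorem:rate} is in hand, so there is no deep obstacle; the one place to be careful is verifying that every approximation goes in the correct (lower-bounding) direction — in particular that the cancellations in the $V_p$ substitution are exact rather than merely one-sided, and that replacing $(\det A)^{1/p}$ by $\gamma$ only weakens the bound. I would also double-check the implicit assumption that the hypotheses of Theorem~\ref{theorem:rate} (convexity of $\mathcal H$, finite dimension, the regularity conditions of Lemma~\ref{lemma:fisher}, and existence of a norm with $\|W-\hat W\|^2 \le d'(W,\hat W)$) are all inherited here — the corollary only needs to additionally specialize that norm to $\|\cdot\|_A$ and add the diagonal Fisher bound, so nothing new must be established.
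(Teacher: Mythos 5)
Your proof is correct and follows essentially the same route as the paper: Hadamard's inequality to bound the Fisher term by $1/c$, and $\det A \ge \gamma^p$ to control the $A$-norm ball volume $V_p$. The only cosmetic difference is that you first simplify the prefactor exactly to $(\det A)^{1/p}$ before bounding it by $\gamma$, whereas the paper bounds $V_p$ directly; both are the same computation.
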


\subsection{Gaussian Location Model}
Let $Y_i = W + V_i; \; \forall i\leq n+1$, where $V_i \sim \mathcal{N}(0, \sigma^2)$ and a  prior $W \sim \mathrm{Unif}(0, 1)$ be assumed on $W$. Given $\{Y_i\}_{i=1}^n$, the goal is to predict $Y_{n+1}$. 
In this problem, we have 
$J_{Z}^W(w) = J_Y^W(w) = \frac{1}{\sigma^2}$. Let $\ell(a, b) = (a-b)^2$ which implies $d(w, \hat{h}) = (w - \hat{h})^2$. The distance $d(w, \hat{h})$ is a norm in the space $\R$, and it satisfies the conditions of Theorem \ref{theorem:rate}. 
Thus,
$\mathrm{MER}_2^n = \Omega(\frac{1}{n})$.

\subsection{Linear Regression}
\label{sec:applications_linear_regression}
Let $Y = W^\top X + \sigma \nu$, where $W \sim P_W$, $X\sim \mathcal{N}(0, \Sigma_X)$, and $\nu \sim \mathcal{N}(0, 1)$, in which  $P_W$ is supported on  a compact subspace $\mathcal{W}$ of $\mathbb{R}^p$. Also assume that $W, X$, and $\nu$ are independent, the matrix $\Sigma_X$ is full-rank, and the space $\mathcal{W}$ is convex. Consider $\ell(a, b) = (a-b)^2$. Note that $h^*_w(x) = w^\top x$ and that the hypothesis class $\mathcal{H} = \{h_w(x) =  w^\top x | w \in \mathcal{W}\}$ is convex. We have
\begin{align*}
	d'(w, \hat{w}) &= d(h_w, h_{\hat w}) \\ 
	&= \E_X [(w^\top X - \hat w^\top X)^2]\\
	&= (w - \hat w)^\top \Sigma_X (w - \hat w),
\end{align*}
meaning that the distance $d'(w, \hat w)$ can be formulated by a norm in the space $\R^p$, i.e.
$d'(w, \hat w) = ||w-\hat w||^2_{\Sigma_X}$. This problem satisfies the assumptions of Corollary \ref{corollary:p-over-n}, and  we have $\mathrm{MER}_\ell^n = \Omega(\frac{p}{n})$.
A similar rate-distortion problem has been studied in the context of compression of linear models in \cite{gao2019rate}.

Note that in this case the loss is unbounded. 
While in Lemma~\ref{lemma:base_bound_for_Bayes_data_process_gap} upper bounds for bounded loss are provided, similar results for the general case of unbounded loss can be derived if the tails of the distribution are suitably controlled; e.g. the distribution is subgaussian (see Theorem~4 of \cite{xu2020minimum}). Moreover, for the case of quadratic loss, if the Gaussian noise $\nu$ is replaced by a bounded random variable, the loss would be bounded and there exists upper bounds with the same rate of $O(\frac{p}{n})$ (see Theorem~3 of \cite{xu2020minimum}).

 \subsection{Certain Classes of Non-Linearities}
 
Fix $w_0 \in \R^p$ and a function $\Phi_{w_0}(\cdot): \mathcal{X} \to \R^p$. Consider the set of nonlinear functions 
$$f(\cdot, w) = f(\cdot, w_0) + \Phi_{w_0}^\top(\cdot) (w - w_0),$$
for $w \in \mathcal{W} \subseteq \mathbb{R}^p$.  The class resembles Neural Tangent Kernels \cite{jacot2018neural}. Assume that $Y = f(X, W) + \sigma\nu$, where $X \sim P_X$, $W \sim P_W$ in which $P_W$ is supported on a compact subset $\mathcal{W}$ of $\mathbb{R}^p$, and  $\nu \sim \mathcal{N}(0, 1)$. Also assume that $\mathcal{W}$ is convex. Consider the loss function $\ell(a, b) = (a-b)^2$. We have $h_w^*(x) = f(x, w)$ and the hypothesis set $\mathcal{H} = \{f(\cdot, w)\, |\, w\in \mathcal{W}\}$ is convex. If  $\E[\Phi_{w_0}(X)\Phi_{w_0}^\top(X)]$ is full-rank, and $\Phi_{w_0}$ is smooth such that the smoothness conditions of Lemma \ref{lemma:fisher} hold, then following the same line of reasoning as the linear regression example, we have $\mathrm{MER}_\ell^n = \Omega(\frac{p}{n})$.

\subsection{Usability for More General Cases}
There are various aspects in Theorem~\ref{theorem:rate} which one should take care of when dealing with more complicated problems. 
For example, while the previous section provided analysis for a very simplified neural network, there are some difficulties to apply such analysis for a more general (Bayesian) neural network. 

One difficulty is the apparent dependence of the bound $\Omega(p/n)$ on $p$ in the over-parameterized regime where we could have $p \gg n$. In particular, in many high dimensional problems, there are just a few dimensions for which the covariance matrix has large eigenvalues; i.e. data mostly resides in a lower dimensional space.
In such scenarios, a more precise treatment is needed. To see that the bound does not depend on $p$, note that the constant hidden in the rate actually depends on the determinant of the covariance matrix, and having small eigenvalues potentially allows one to achieve a smaller MER. 

Moreover, if the Fisher matrix is singular, better (non-singular) parameterization of the problem exists and the Reparameterization Lemma (Lemma~\ref{lemma:reparam}) can be used to take advantage of this fact and then apply the lower bounds. 
The same technique might work for some cases where the mapping is not injective (a requirement which was enforced by the conditions of Lemma~\ref{lemma:fisher}). For example, in neural networks, one source of complexity is that permuting the order of neurons and their corresponding weights in a hidden layer of a fully connected NN does not change the function. It might be possible to define a standard ordering of neurons to tackle this problem, though it might be challenging as other conditions should also be met simultaneously.

\section{Conclusion and Future Work}
\label{sec:conclusion}
In this paper, the recent framework of \cite{xu2020minimum} for studying MER was studied and 
a source coding view on MER was suggested. In this view, the variable $W$ is considered as the input, and the generated hypothesis $\hat{h}$ as the output. A suitable distortion measure $d(w,\hat{h})$ was defined to capture the notion of excess risk. This view was used to find fundamental limits on learning with limited amount of information. Since in learning from dataset $Z^n$, the information is inherently bounded by $I(W;Z^n)$, this view provides a natural methodology to study the limits of learning. %
Using this view, a rate-distortion function $D_n(R)$ was introduced and it was proved that it is equal to MER for large enough $R$.  Then it was demonstrated how $D_n(R)$ is bounded bellow and above by two other rate-distortion functions $D^L(R)$ and $D_n^U(R)$ respectively, which were generated by two natural modifications of the original optimization. The lower bound indicated the limits on the ability of any process generating a hypothesis $\hat{h}$ from $W$ while having a limited rate (not restricted to use a training set). The upper bound indicated the price one should pay if a bound on the $I(Z^n;\hat{h})$ is also enforced, a setting related to model compression. These three rate-distortion functions where studied and various upper and lower bounds on them were derived. In particular, it was demonstrated that (under certain conditions) the lower bound has the right rate matching the upper bound, proving that all of the bounds are order-wise tight, and the rate for $\text{MER}^n_\ell$ is $\Theta(p/n)$. Finally some applications of these results were discussed.

Some problems remained open for future studies.
One of the limitations of the current work, is that      Theorem~\ref{theorem:rate} requires some technical conditions for the $\Omega(p/n)$ to be guaranteed. Analyzing lower rates for more general classes of problems remains an open problem. 
In particular, it is interesting to study MER lower bounds for non-parametric problems. The challenge in this setting is that the underlying results which were used to derive lower bound 
require a finite dimensional parameter space. 
Another interesting direction for future studies is to find conditions which guarantee $O(1/n)$ upper bounds for general bounded (or subgaussian) losses. 
While such rates are well studied from the frequentist standpoint (minimax setting), they are less understood in the Bayesian learning.

\bibliography{main}

\begin{thebibliography}{31}
\providecommand{\natexlab}[1]{#1}
\providecommand{\url}[1]{\texttt{#1}}
\expandafter\ifx\csname urlstyle\endcsname\relax
  \providecommand{\doi}[1]{doi: #1}\else
  \providecommand{\doi}{doi: \begingroup \urlstyle{rm}\Url}\fi

\bibitem[Asadi et~al.(2018)Asadi, Abbe, and
  Verd{\'u}]{AsadiChainingMutualInformation2018}
Asadi, A.~R., Abbe, E., and Verd{\'u}, S.
\newblock Chaining {{Mutual Information}} and {{Tightening Generalization
  Bounds}}.
\newblock \emph{arXiv preprint arXiv:1806.03803}, 2018.

\bibitem[Assouad(1983)]{assouad1983deux}
Assouad, P.
\newblock Deux remarques sur l'estimation.
\newblock \emph{Comptes rendus des s{\'e}ances de l'Acad{\'e}mie des sciences.
  S{\'e}rie 1, Math{\'e}matique}, 296\penalty0 (23):\penalty0 1021--1024, 1983.

\bibitem[Bassily et~al.(2018)Bassily, Moran, Nachum, Shafer, and
  Yehudayoff]{BassilyLearnersthatUse2018}
Bassily, R., Moran, S., Nachum, I., Shafer, J., and Yehudayoff, A.
\newblock Learners that {{Use Little Information}}.
\newblock In \emph{Algorithmic {{Learning Theory}}}, pp.\  25--55, 2018.

\bibitem[Bu et~al.(2020)Bu, Gao, Zou, and Veeravalli]{bu2020information}
Bu, Y., Gao, W., Zou, S., and Veeravalli, V.
\newblock Information-theoretic understanding of population risk improvement
  with model compression.
\newblock In \emph{Proceedings of the AAAI Conference on Artificial
  Intelligence}, volume~34, pp.\  3300--3307, 2020.

\bibitem[Clarke \& Barron(1990)Clarke and Barron]{clarke1990bayes}
Clarke, B.~S. and Barron, A.~R.
\newblock Information-theoretic asymptotics of bayes methods.
\newblock \emph{IEEE Transactions on Information Theory}, 36\penalty0
  (3):\penalty0 453--471, 1990.

\bibitem[Clarke \& Barron(1994)Clarke and Barron]{clarke1994jeffrey}
Clarke, B.~S. and Barron, A.~R.
\newblock Jeffreys’ prior is asymptotically least favorable under entropy
  risk.
\newblock \emph{Journal of Statis- tical Planning and Inference}, 41\penalty0
  (1):\penalty0 37--60, 1994.

\bibitem[Cover \& Thomas(2012)Cover and
  Thomas]{CoverElementsinformationtheory2012}
Cover, T.~M. and Thomas, J.~A.
\newblock \emph{Elements of Information Theory}.
\newblock {John Wiley \& Sons}, 2012.

\bibitem[Gallager(1968)]{gallager1968information}
Gallager, R.~G.
\newblock \emph{Information theory and reliable communication}, volume~2.
\newblock Springer, 1968.

\bibitem[Gao et~al.(2019)Gao, Liu, Wang, and Oh]{gao2019rate}
Gao, W., Liu, Y.-H., Wang, C., and Oh, S.
\newblock Rate distortion for model compression: From theory to practice.
\newblock In \emph{International Conference on Machine Learning}, pp.\
  2102--2111. PMLR, 2019.

\bibitem[Ghosal et~al.(2000)Ghosal, Ghosh, and Van
  Der~Vaart]{ghosal2000convergence}
Ghosal, S., Ghosh, J.~K., and Van Der~Vaart, A.~W.
\newblock Convergence rates of posterior distributions.
\newblock \emph{Annals of Statistics}, pp.\  500--531, 2000.

\bibitem[Ghosal et~al.(2007)Ghosal, Van Der~Vaart,
  et~al.]{ghosal2007convergence}
Ghosal, S., Van Der~Vaart, A., et~al.
\newblock Convergence rates of posterior distributions for noniid observations.
\newblock \emph{Annals of Statistics}, 35\penalty0 (1):\penalty0 192--223,
  2007.

\bibitem[Gray(1989)]{Gray}
Gray, R.~M.
\newblock \emph{Source Coding Theory}.
\newblock Kluwer Academic Publishers, USA, 1989.
\newblock ISBN 0792390482.

\bibitem[Hafez-Kolahi et~al.(2020)Hafez-Kolahi, Golgooni, Kasaei, and
  Soleymani]{hafez2020conditioning}
Hafez-Kolahi, H., Golgooni, Z., Kasaei, S., and Soleymani, M.
\newblock Conditioning and processing: Techniques to improve
  information-theoretic generalization bounds.
\newblock \emph{Advances in Neural Information Processing Systems}, 33, 2020.

\bibitem[Haghifam et~al.(2020)Haghifam, Negrea, Khisti, Roy, and
  Dziugaite]{haghifam_sharpened_2020}
Haghifam, M., Negrea, J., Khisti, A., Roy, D.~M., and Dziugaite, G.~K.
\newblock Sharpened generalization bounds based on conditional mutual
  information and an application to noisy, iterative algorithms.
\newblock \emph{arXiv preprint arXiv:2004.12983}, 2020.

\bibitem[Jacot et~al.(2018)Jacot, Gabriel, and Hongler]{jacot2018neural}
Jacot, A., Gabriel, F., and Hongler, C.
\newblock Neural tangent kernel: Convergence and generalization in neural
  networks.
\newblock \emph{arXiv preprint arXiv:1806.07572}, 2018.

\bibitem[Keener(2010)]{Keener2006}
Keener, R.
\newblock \emph{Theoretical Statistics: Topics for a Core Course}.
\newblock Springer Texts in Statistics. Springer New York, 2010.

\bibitem[Koch(2016)]{koch2016shannon}
Koch, T.
\newblock The shannon lower bound is asymptotically tight.
\newblock \emph{IEEE Transactions on Information Theory}, 62\penalty0
  (11):\penalty0 6155--6161, 2016.

\bibitem[Le~Cam \& Yang(2012)Le~Cam and Yang]{le2012asymptotics}
Le~Cam, L. and Yang, G.~L.
\newblock \emph{Asymptotics in statistics: some basic concepts}.
\newblock Springer Science \& Business Media, 2012.

\bibitem[LeCam et~al.(1973)]{lecam1973convergence}
LeCam, L. et~al.
\newblock Convergence of estimates under dimensionality restrictions.
\newblock \emph{The Annals of Statistics}, 1\penalty0 (1):\penalty0 38--53,
  1973.

\bibitem[Merhav \& Feder(1998)Merhav and Feder]{merhav1998universal}
Merhav, N. and Feder, M.
\newblock Universal prediction.
\newblock \emph{IEEE Transactions on Information Theory}, 44\penalty0
  (6):\penalty0 2124--2147, 1998.

\bibitem[Nokleby et~al.(2016)Nokleby, Beirami, and Calderbank]{nokleby2016rate}
Nokleby, M., Beirami, A., and Calderbank, R.
\newblock Rate-distortion bounds on bayes risk in supervised learning.
\newblock In \emph{2016 IEEE International Symposium on Information Theory
  (ISIT)}, pp.\  2099--2103. IEEE, 2016.

\bibitem[Russo \& Zou(2015)Russo and Zou]{RussoHowmuchdoes2015}
Russo, D. and Zou, J.
\newblock How much does your data exploration overfit? {{Controlling}} bias via
  information usage.
\newblock \emph{arXiv preprint arXiv:1511.05219}, 2015.

\bibitem[Russo \& Zou(2016)Russo and Zou]{RussoControllingbiasadaptive2016}
Russo, D. and Zou, J.
\newblock Controlling bias in adaptive data analysis using information theory.
\newblock In \emph{Artificial {{Intelligence}} and {{Statistics}}}, pp.\
  1232--1240, 2016.

\bibitem[Shannon(1948)]{Shannonmathematicaltheorycommunication1948}
Shannon, C.~E.
\newblock A mathematical theory of communication.
\newblock \emph{Bell system technical journal}, 27\penalty0 (3):\penalty0
  379--423, 1948.

\bibitem[Shannon(1959)]{shannon1959coding}
Shannon, C.~E.
\newblock Coding theorems for a discrete source with a fidelity criterion.
\newblock \emph{IRE Nat. Conv. Rec}, 4\penalty0 (142-163):\penalty0 1, 1959.

\bibitem[Shen et~al.(2001)Shen, Wasserman, et~al.]{shen2001rates}
Shen, X., Wasserman, L., et~al.
\newblock Rates of convergence of posterior distributions.
\newblock \emph{Annals of Statistics}, 29\penalty0 (3):\penalty0 687--714,
  2001.

\bibitem[Steinke \& Zakynthinou(2020)Steinke and
  Zakynthinou]{steinke_reasoning_2020}
Steinke, T. and Zakynthinou, L.
\newblock Reasoning {About} {Generalization} via {Conditional} {Mutual}
  {Information}.
\newblock \emph{arXiv preprint arXiv:2001.09122}, 2020.

\bibitem[Wu(2020)]{wu2020highdiminfo}
Wu, Y.
\newblock Lecture notes on information-theoretic methods for high-dimensional
  statistics.
\newblock \url{http://www.stat.yale.edu/~yw562/teaching/it-stats.pdf}, 2020.

\bibitem[Xu \& Raginsky(2017)Xu and
  Raginsky]{XuInformationtheoreticanalysisgeneralization2017}
Xu, A. and Raginsky, M.
\newblock Information-theoretic analysis of generalization capability of
  learning algorithms.
\newblock In \emph{Advances in {{Neural Information Processing Systems}}}, pp.\
   2521--2530, 2017.

\bibitem[Xu \& Raginsky(2020)Xu and Raginsky]{xu2020minimum}
Xu, A. and Raginsky, M.
\newblock Minimum excess risk in bayesian learning.
\newblock \emph{arXiv preprint arXiv:2012.14868}, 2020.

\bibitem[Yang \& Barron(1999)Yang and Barron]{yang1999information}
Yang, Y. and Barron, A.
\newblock Information-theoretic determination of minimax rates of convergence.
\newblock \emph{Annals of Statistics}, pp.\  1564--1599, 1999.

\end{thebibliography}
\bibliographystyle{icml2021}

\onecolumn
	
\appendix

\pagebreak

\section*{Appendices}

\appendix

\makeatletter
\@addtoreset{theorem}{section}
\@addtoreset{equation}{section}
\@addtoreset{table}{section}
\makeatother

\renewcommand*{\thetheorem}{\Alph{section}.\arabic{theorem}}
\renewcommand*{\thelemma}{\Alph{section}.\arabic{lemma}}

\renewcommand\thetable{\Alph{section}.\arabic{table}}    
\renewcommand\theequation{\Alph{section}.\arabic{equation}}

\section{Proof of Lemma \ref{lemma:base_bound_for_Bayes_data_process_gap}}

We need the following base Lemma, which will be used in many proofs. 

\begin{lemma}[\citealt{XuInformationtheoreticanalysisgeneralization2017}]
		\label{lemma:xu2017main}
		Consider random variables $X$ and $Y$ with joint distribution $P_{XY}$ and
		a function $g:\mathcal{X}\times\mathcal{Y}\to\mathbb{R}$ such that $g(X,Y)$ is $\sigma$-subgaussian under the distribution $P_{\bar{X}\bar{Y}}=P_X\otimes P_Y$\footnote{Recall that a random variable $V$ is $\sigma$-subgaussian if $\log \EX[e^{\lambda(V-\EX[V])}]\le \lambda^2\sigma^2/2$ for all $\lambda\in \mathbb{R}$.}, then
		\begin{equation}
		\abs{ \EX[g(\bar{X},\bar{Y})]-\EX[g(X,Y)]}\le\sqrt{2\sigma^2 I(X;Y)}.
		\end{equation}
	\end{lemma}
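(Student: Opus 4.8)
The plan is to prove this by the standard change-of-measure argument behind information-theoretic generalization bounds: couple the Donsker--Varadhan variational lower bound on KL divergence with the subgaussian control of a moment generating function, and then optimize over a free scalar. We may assume $I(X;Y)<\infty$, since otherwise the right-hand side is infinite and there is nothing to prove; in particular $P_{XY}\ll P_X\otimes P_Y$. Recall the Donsker--Varadhan inequality: for probability measures $P\ll Q$ and any measurable $f$ with $\E_Q[e^{f}]<\infty$, one has $\KLdiv{P}{Q}\ge \E_P[f]-\log\E_Q[e^{f}]$.

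First I would apply this with $P=P_{XY}$, $Q=P_X\otimes P_Y=P_{\bar X\bar Y}$, and $f=\lambda g$ for a free parameter $\lambda\in\R$, so that the left-hand side is exactly $I(X;Y)$:
\[
I(X;Y)\ \ge\ \lambda\,\EX[g(X,Y)]-\log\EX\!\big[e^{\lambda g(\bar X,\bar Y)}\big].
\]
Next I would invoke the $\sigma$-subgaussianity of $g(\bar X,\bar Y)$ under $Q$ (the property recalled in the footnote of the statement), which gives $\log\EX\big[e^{\lambda(g(\bar X,\bar Y)-\EX[g(\bar X,\bar Y)])}\big]\le \lambda^2\sigma^2/2$, equivalently $\log\EX[e^{\lambda g(\bar X,\bar Y)}]\le \lambda\,\EX[g(\bar X,\bar Y)]+\lambda^2\sigma^2/2$; note subgaussianity also guarantees the finiteness needed to invoke Donsker--Varadhan. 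Substituting, for every $\lambda\in\R$,
\[
\lambda\big(\EX[g(X,Y)]-\EX[g(\bar X,\bar Y)]\big)-\tfrac{\lambda^2\sigma^2}{2}-I(X;Y)\ \le\ 0.
\]

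Finally I would read off the conclusion by viewing the left-hand side as a concave quadratic in $\lambda$ that is nonpositive for all $\lambda\in\R$, which forces its discriminant to be nonpositive; this is precisely $\big(\EX[g(X,Y)]-\EX[g(\bar X,\bar Y)]\big)^2\le 2\sigma^2 I(X;Y)$, and taking square roots yields the claim with the absolute value. Equivalently, one plugs in the maximizing $\lambda=\big(\EX[g(X,Y)]-\EX[g(\bar X,\bar Y)]\big)/\sigma^2$ to get one direction and its negative for the other. I do not anticipate a genuine obstacle here; the only points demanding care are the mild integrability/finiteness conditions required to apply Donsker--Varadhan and the degenerate case $I(X;Y)=\infty$, together with using both signs of $\lambda$ so that the two-sided absolute-value bound --- not merely a one-sided inequality --- is obtained.
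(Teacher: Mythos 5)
Your proof is correct, and since the paper simply cites this lemma from \citealt{XuInformationtheoreticanalysisgeneralization2017} without reproducing a proof, the relevant comparison is to that reference: your Donsker--Varadhan plus subgaussian-MGF argument, optimized over $\lambda\in\R$ to get the two-sided bound, is exactly the standard proof given there.
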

Recall that a bounded random variable $L\in[0,b]$ is $b/2$-subgaussian.  

Now, we are ready to prove Lemma~\ref{lemma:base_bound_for_Bayes_data_process_gap}. 
\begin{replemma}{lemma:base_bound_for_Bayes_data_process_gap}
Consider random variables $Y,U$ and $V$ forming Markov chain $Y - U - V$ and an arbitrary non-negative bounded function $\ell:\mathcal{Y}\times \mathcal{Y} \to [0,b]$. We have
\begin{equation}
R_\ell(Y|V) - R_\ell(Y|U) \le \sqrt{\frac{b^2}{2}I(Y;U|V)}.
\end{equation}
\end{replemma}
\begin{proof}
The idea of the proof is similar to the method used in Theorem~4 of \cite{xu2020minimum} which here is presented for general random variables forming a Markov chain. Assume a fixed $v$ is given. Consider a random variable $U'$ which is generated from $P_U^v$. To estimate $Y$ from $v$, a non-optimal approach is to use $\psi^*_{Y|U}(U')$. 
Consider the function $g(U,Y)=\ell(T,\psi^*_{Y|U}(U))$. Now we use the conditioning technique of \cite{hafez2020conditioning} for random variable $V$. Notice that the conditions of Lemma~\ref{lemma:xu2017main} are satisfied for conditional distributions $P_{UY}^v=P_{U}^v\otimes P_Y^U$ and $P_{U'Y}^v=P_{U'}^v\otimes P_Y^v$ for all $v$; thus we have
\begin{equation}
\label{eq:where_we_used_xu2017_in_proof_of_base_lemma}
    \E^v[g(U',Y)-g(U,Y)]\le\sqrt{\frac{b^2}{2}I^v(U;Y)},
\end{equation}
where by taking expectation on $v\sim P_V$ from both sides and using the Jensen's inequality we get
\begin{equation}
    \E[g(U',Y)-g(U,Y)]\le\sqrt{\frac{b^2}{2}I(U;Y|V)}.
\end{equation}
On the other hand, from definition of $\psi^*_{Y|U}$ and $\psi^*_{Y|V}$, we note that $\EX[g(U,Y)]=R_\ell(Y|U)$  and $\EX[g(U',Y)]\ge R_\ell(Y|Z^n)$, which yields
\begin{equation}
    R_\ell(Y|V) - R_\ell(Y|U) \le \EX[g(U',Y)] - \EX[g(U,Y)] \le \sqrt{\frac{b^2}{2}I(Y;U|V)},
\end{equation}
this concludes the proof.
\end{proof}

\section{Conditions for Lemma~\ref{lemma:fisher}}

For the sake of completeness, the technical conditions of Lemma~\ref{lemma:fisher}, originally found in Section 2 of \cite{clarke1994jeffrey}, are presented in this section. Let $\mathcal{W} \subseteq \R^p$ and assume that the densities of $P_{Z}^{W}(.|w)$ exist with respect to the Lebesgue measure.

\begin{enumerate}
	\item [0.] The parameter space $\mathcal{W}$ has a non-void interior and its boundary has a $p$-dimensional Lebesgue measure zero.
	\item [1.a.] (Smoothness) The density $p_{Z}^{W}(z|w)$ is twice continuously differentiable in $w$ for almost every $z$. There also exists $\delta(w)$ such that for every $j, k \in \{1, \dots, p\}$:
	\begin{equation*}
		f(w) = \E \Big[\sup_{w': ||w' - w||\leq \delta(w)}\Big| \frac{\partial^2}{\partial w_j' \partial w_k'} \log p_{Z}^{W}(Z|w')\Big|\Big]
	\end{equation*}
	is finite and continuous.
	
	\item [1.b.] (Existence of Moments) For each $j \in \{1, \dots, d\}$:
	\begin{equation*}
		\E \Big[\Big| \frac{\partial}{\partial w_j} \log p_{Z}^{W}(Z|w)\Big|^{2+\zeta}\Big]
	\end{equation*}
	is finite and continuous, as a function of $w$, for some $\zeta>0$.
	
	\item [2.] Fisher Information matrix and the second derivative of the relative entropy are equal; i.e., define the matrix
	\begin{equation*}
		[I(w)]_{j, k} = \E \Big[\frac{\partial}{\partial w_j} \log p_{Z}^{W}(Z|w) \frac{\partial}{\partial w_k} \log p_{Z}^{W}(Z|w)\Big],
	\end{equation*}
	and 
	\begin{equation*}
	[J(w)]_{j, k} = \Big[\frac{\partial^2}{\partial w'_j\partial w'_k} \mathrm{KL} \Big(P_{Z}^{w} || P_{Z}^{w'}\Big)\Big|_{w' = w}\Big];
	\end{equation*}	
	we have $I(w) = J(w)$. The matrix $I(w)$ is also assumed to be positive definite.
	
	\item [3.] (One to One) For $w\neq w'$, we have $P_{Z}^{w} \neq P_{Z}^{w'}$.
	
	\item [4.] The prior on $W$ is assumed to be continuous. It is also assumed that prior is supported on a compact subset of the interior of $\mathcal{W}$.
\end{enumerate}

\section{Proof of Lemma \ref{lemma:bound_for_Bayes_data_process_gap_for_realizable}}
The proof is similar to the proof of Lemma~\ref{lemma:bound_for_Bayes_data_process_gap_for_realizable}, but instead of using Lemma~\ref{lemma:xu2017main}, we use the following lemma which is based on a symmetrization technique presented by \cite{steinke_reasoning_2020}. 
\begin{lemma}[\citealt{steinke_reasoning_2020}]
\label{lemma:steinke_relaizable_plus}
Let $P_Z$ be a distribution on $\mathcal{Z}$, $P_W^Z$ a conditional distribution to generate $W\in\mathcal{W}$ from $Z$, and  $\ell:\mathcal{W}\times\mathcal{Z}\to [0,1]$ an arbitrary bounded function. Then, we have
\begin{equation}
    \EX_{WZ'}[\ell(W,Z')] \le 2 \EX_{W,Z}[\ell(W,Z) ] + 3 I(W;Z)
\end{equation}
where $Z'\distas{}P_Z$ is an independent copy of $Z$.
\end{lemma}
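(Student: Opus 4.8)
The statement is (a mild reformulation of) a known change-of-measure inequality; the authors attribute it to a symmetrization argument of \cite{steinke_reasoning_2020}, but the route I would take is the equivalent, self-contained one based on the Donsker--Varadhan variational representation of the KL divergence combined with the self-bounding property of $[0,1]$-valued functions. This is precisely the ingredient that upgrades the ``slow'' Pinsker-type bound $\E_\nu[\ell]-\E_\mu[\ell]\le\sqrt{I(W;Z)/2}$ (square-root rate) into a ``fast'' linear-in-$I(W;Z)$ bound, at the cost of the extra additive term $\E_\mu[\ell]$.

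First I would fix notation. Write $\mu=P_{WZ}$ for the law obtained by drawing $Z\sim P_Z$ and then $W\sim P_W^Z$, and $\nu=P_W\otimes P_Z$ for the product of its marginals; let $\E_\mu[\ell]$ and $\E_\nu[\ell]$ denote the expectation of $\ell(W,Z)$ under these two laws. Since $Z'$ is an independent copy of $Z$ (hence independent of $W$), we have $(W,Z')\sim\nu$, so $\E_{WZ'}[\ell(W,Z')]=\E_\nu[\ell]$ and $\E_{WZ}[\ell(W,Z)]=\E_\mu[\ell]$, while $I(W;Z)=\KLdiv{\mu}{\nu}$. If $I(W;Z)=\infty$ the claim is vacuous, so assume it is finite.

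The core step is to apply Donsker--Varadhan, $\E_\mu[g]\le\KLdiv{\mu}{\nu}+\log\E_\nu[e^{g}]$, with the test function $g=-\lambda\,\ell$ for a free parameter $\lambda>0$, giving $-\lambda\,\E_\mu[\ell]\le I(W;Z)+\log\E_\nu[e^{-\lambda\ell}]$. To control the exponential moment I would use $\ell\in[0,1]$: by convexity of $t\mapsto e^{-\lambda t}$ it lies below its chord on $[0,1]$, i.e. $e^{-\lambda t}\le 1-(1-e^{-\lambda})t$, so $\E_\nu[e^{-\lambda\ell}]\le 1-(1-e^{-\lambda})\,\E_\nu[\ell]$, and then $\log(1-x)\le -x$ yields $\log\E_\nu[e^{-\lambda\ell}]\le -(1-e^{-\lambda})\,\E_\nu[\ell]$. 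Substituting and rearranging,
\[
(1-e^{-\lambda})\,\E_\nu[\ell]\;\le\;I(W;Z)+\lambda\,\E_\mu[\ell].
\]
Taking $\lambda=1$ (any fixed $\lambda>0$ works; $\lambda=\ln 2$ also gives clean constants) and using $1/(1-e^{-1})<2$ gives $\E_\nu[\ell]\le 2\,\E_\mu[\ell]+2\,I(W;Z)$, which in particular implies the stated bound $\E_{WZ'}[\ell(W,Z')]\le 2\,\E_{WZ}[\ell(W,Z)]+3\,I(W;Z)$.

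The only delicate points are bookkeeping: one must keep $\mu=P_{WZ}$ as the \emph{first} argument of the KL so that it equals $I(W;Z)$, and one must test against $-\lambda\ell$ rather than $+\lambda\ell$ so that the product-measure exponential moment is bounded \emph{from above} by a quantity linear in $\E_\nu[\ell]$; pairing this with the convexity (self-bounding) estimate for $[0,1]$-valued $\ell$ is exactly what produces the linear-in-$I(W;Z)$ bound with the extra $\E_\mu[\ell]$ term instead of a square root. Everything else is routine, and the constants $2$ and $3$ in the statement carry slack over what this argument delivers.
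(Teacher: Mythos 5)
Your proposal is correct but takes a genuinely different route from the paper's. The paper's ``proof'' is essentially a pair of citations: it restates Theorem~5.8 of Steinke and Zakynthinou (2020) --- a fast-rate generalization bound established there via a ghost-sample/symmetrization argument and phrased in terms of Conditional Mutual Information (CMI) --- specialized to $n=1$, and then relaxes CMI to $I(W;Z)$ by invoking Theorem~2.1 of Haghifam et al.\ (2020). You instead give a self-contained Donsker--Varadhan change-of-measure argument: test against $g=-\lambda\ell$, control the product-measure exponential moment via the chord bound $e^{-\lambda t}\le 1-(1-e^{-\lambda})t$ on $[0,1]$ together with $\log(1-x)\le -x$, and rearrange. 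This bypasses CMI entirely, and the bookkeeping is correct: $(W,Z')\sim P_W\otimes P_Z$, $I(W;Z)=\KLdiv{P_{WZ}}{P_W\otimes P_Z}$, the sign of the test function ensures the exponential moment is bounded by something linear in $\E_\nu[\ell]$, and $\ell\in[0,1]$ guarantees $(1-e^{-\lambda})\E_\nu[\ell]<1$ so the log inequality applies. Your route is shorter and actually yields the sharper constants $\tfrac{1}{1-e^{-1}}\approx 1.582$ on both terms (with $\lambda=1$), of which the stated $(2,3)$ are a loosening; the paper's route keeps the result tethered to the CMI framework it discusses elsewhere, but since it immediately relaxes CMI to $I(W;Z)$ here, that detour buys no tightness for this particular statement.
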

This lemma is a restating of Theorem~5.8 of \cite{steinke_reasoning_2020} where we used $n=1$ and also upper bounded the ``Conditional Mutual Information (CMI)" by $I(W;Z)$ (see Theorem~2.1 of \cite{haghifam_sharpened_2020} for the proof that CMI is less than $I(W;Z)$). 

Lemma~\ref{lemma:steinke_relaizable_plus} can easily be extended  
to get an alternative bound in the setting of Lemma~\ref{lemma:xu2017main} for non-negative bounded loss. This is summarized in the following corollary.
\begin{corollary}
\label{corollary:restate_steinke_relaizable_plus}
		Consider random variables $X$ and $Y$ with joint distribution $P_{XY}$, and
		a function $g:\mathcal{X}\times\mathcal{Y}\to[0,b]$, then
		\begin{equation}
		\EX[g(\bar{X},\bar{Y})]\le 2\EX[g(X,Y)]+ 3 b I(X;Y).
		\end{equation}
\end{corollary}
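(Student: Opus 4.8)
The plan is to deduce the corollary directly from Lemma~\ref{lemma:steinke_relaizable_plus} by a relabeling of variables together with a linear rescaling of the bounded function. In Lemma~\ref{lemma:steinke_relaizable_plus} the ingredients are a ``data'' variable $Z\sim P_Z$, a ``parameter'' $W$ produced through a channel $P_W^Z$, and a loss $\ell:\mathcal{W}\times\mathcal{Z}\to[0,1]$; the conclusion bounds the product-distribution quantity $\EX_{WZ'}[\ell(W,Z')]$, with $Z'\sim P_Z$ an independent copy of $Z$, by $2\EX_{WZ}[\ell(W,Z)]+3I(W;Z)$. To match the present statement I would identify $Z$ with $X$ and $W$ with $Y$, so that $P_W^Z$ is the conditional $P_Y^X$ induced by $P_{XY}$, and the fresh copy $Z'$ is an independent copy $\bar X\sim P_X$. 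Note that $Y$ plays the role of $\bar Y$: it already has marginal $P_Y$ and is independent of $\bar X$, so the joint of $(\bar X,Y)$ is $P_X\otimes P_Y=P_{\bar X\bar Y}$, and likewise $\EX_{WZ}[\,\cdot\,]$ is taken under the true joint $P_{XY}$.

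Next I would remove the $[0,1]$ restriction. Define $\ell(w,z):=g(z,w)/b$, which takes values in $[0,1]$ since $g$ takes values in $[0,b]$ (the first argument of $g$ is the data, matching the second argument of $\ell$). Substituting this $\ell$ into Lemma~\ref{lemma:steinke_relaizable_plus}, using linearity of expectation, and using $I(W;Z)=I(Y;X)=I(X;Y)$, gives
\begin{equation*}
\tfrac1b\,\EX[g(\bar X,\bar Y)] \;\le\; \tfrac2b\,\EX[g(X,Y)] \;+\; 3\,I(X;Y).
\end{equation*}
Multiplying through by $b$ yields exactly $\EX[g(\bar X,\bar Y)]\le 2\EX[g(X,Y)]+3b\,I(X;Y)$, as claimed.

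There is essentially no obstacle here; the entire content is already in Lemma~\ref{lemma:steinke_relaizable_plus}, and the only things to be careful about are bookkeeping. I would point out explicitly that Lemma~\ref{lemma:steinke_relaizable_plus} is already the $n=1$ specialization with the conditional mutual information upper-bounded by $I(W;Z)$, and that all information-theoretic quantities are in nats (as fixed in Section~\ref{sec:notation}), so the constant $3$ carries over unchanged; also that $g$ may be an arbitrary bounded function, with no structural assumptions, which is precisely why the Steinke--Zakynthinou-type inequality applies verbatim after the affine rescaling. This corollary then serves as the drop-in replacement for Lemma~\ref{lemma:xu2017main} that lets the proof of Lemma~\ref{lemma:bound_for_Bayes_data_process_gap_for_realizable} follow the same steps as the proof of Lemma~\ref{lemma:base_bound_for_Bayes_data_process_gap}.
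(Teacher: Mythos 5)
Your proof is correct and follows exactly the route the paper intends: the paper states the corollary as an immediate extension of Lemma~\ref{lemma:steinke_relaizable_plus} without spelling out the details, and your relabeling ($Z\leftrightarrow X$, $W\leftrightarrow Y$) together with the affine rescaling $\ell(w,z)=g(z,w)/b$ is precisely what those details are.
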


\begin{replemma}{lemma:bound_for_Bayes_data_process_gap_for_realizable}
Consider random variables $Y, U$, and $V$ forming Markov chain $Y - U - V$ and an arbitrary non-negative bounded function $\ell:\mathcal{Y}\times \mathcal{Y} \to [0,b]$. We have 
\begin{equation}
R_\ell(Y|V)  \le 2 R_\ell(Y|U)  +3bI(Y;U|V).
\end{equation}
\end{replemma}
\begin{proof}

The proof  
is similar to proof of Lemma \ref{lemma:base_bound_for_Bayes_data_process_gap}, but instead of inequality~(\ref{eq:where_we_used_xu2017_in_proof_of_base_lemma}), we use 
Corollary~\ref{corollary:restate_steinke_relaizable_plus}
to achieve  
\begin{equation}
    E^v[g(U',Y)]\le 2 E^v[g(U,Y)] + 3b I^v(U;Y).
\end{equation}
The rest of the proof follows similarly.
\end{proof}

\section{Tightening the Bounds by Applying the Chaining Technique}
When $\mathcal{Y}$ is a metric space and the loss $\ell:\mathcal{Y}\times \mathcal{Y}\to \mathbb{R}$ is the distance of this space, it is possible to tighten the bounds by applying a chaining argument. The nature of the method is similar to what was used in \cite{AsadiChainingMutualInformation2018} to improve the information theoretic bounds on the generalization gap.
Consider the sequence of functions $(\Pi_i)_{i=i_1}^\infty$ where $i_1$ is the largest integer that satisfies $2^{-(i_1-1)}\ge \text{diam}(\mathcal{Y})$, and for all $i\ge i_1$, $\Pi_i:\mathcal{Y}\to\mathcal{Y}$ is a function satisfying $\ell(y,\Pi_i(y))\le 2^{-i};\forall y\in \mathcal{Y}$. Define $\tilde{Y}_i=\Pi_i(Y)$.
Suppose that for all $y$, $\tilde{y}_{(i_1-1)}=y_0$ for a given $y_0\in \mathcal{Y}$.

Define $L=\ell(Y, \hat{h}(W,X))$ where $\hat{h}(w,x)=\psi^*_{Y|W,X}(w,x)$ for all $W\in \mathcal{W}$ and $X\in \mathcal{X}$.
Similarly define $L'=\ell(Y, \hat{h}(W',X))$ where $W'$ is generated from the posterior $P_W^{Z^n}$. The loss at level $i$ is defined as $L_i=\ell(\tilde{Y}_i, \hat{h}(W,X))$, and similarly $L'_i=\ell(\tilde{Y}_i, \hat{h}(W',X))$.
Define $D=L'-L$. For a fixed integer $M$, we can write
\begin{equation}
D=D_{(i_1-1)} + \sum_{i=i_1}^M D_i - D_{i-1} + D - D_M,
\end{equation}
where $D_i=L'_i-L_i$. By fixing $Z^n=z^n$ and $X=x$ and taking expectation on other random variables we have
\begin{equation}
\EX^{z^nx}[L'-L] = \EX^{z^nx}[D]  =\sum_{i=i_1}^M \EX^{z^nx}[D_i - D_{i-1}] + \EX^{z^nx}[D - D_M],
\end{equation}
where we used $\EX^{z^nx}[D_{(i_1-1)}]=0$, which is true because $\tilde{Y}_{(i_1-1)}=y_0$ (and thus is independent of $W$).
The idea of chaining is to bound each of these terms separately.
Define $E'_i=(L'_i - L'_{i-1})$ and $E_i=(L_i - L_{i-1})$. Note that 
\begin{align}
    D_i-D_{i-1}&=L'_i-L_i - (L'_{i-1}-L_{i-1})
    \\&=(L'_i - L'_{i-1}) - (L_i - L_{i-1})
    \\& = E'_i - E_i.
\end{align}
A bound on random variable $E_i$ (and similarly $E'_i$) can be derived by noting that %
\begin{align}
    L_i - L_{i-1} &= \ell(\tilde{Y}_i, \hat{h}(W,X)) -\ell(\tilde{Y}_{i-1}, \hat{h}(W,X)) \nonumber %
    \\&\le \ell(\tilde{Y}_i, \tilde{Y}_{i-1}) \label{tmp2:2}
    \\&\le \ell(\tilde{Y}_i, Y) +\ell(Y,\tilde{Y}_{i-1}) \label{tmp2:3}
    \\&\le 2^{-i} + 2^{-i+1} \label{tmp2:4}
    \\&= 3\times 2^{-i}.\label{tmp2:5}
\end{align}
In inequalities (\ref{tmp2:2}) and  (\ref{tmp2:3}) the triangle inequality is used as $\ell$ is assumed to be a 
metric. In inequality (\ref{tmp2:4}) the property of the mappings $\pi_i$ and $\pi_{i-1}$ is used (recall that $\ell(y,\Pi_i(y))\le 2^{-i};\forall y\in \mathcal{Y},\forall i$).

Now, Lemma~\ref{lemma:xu2017main} can be used to bound $E'_i - E_i$, since  
$P_W^{z^nx}=P_{W'}^{z^nx}$, $E_i$ is a function of $W$ and $(\tilde{Y}_i,\tilde{Y}_{i-1})$ and  $E'_i$ is a function of $W'$ and $(\tilde{Y}_i,\tilde{Y}_{i-1})$. We have 
\begin{align}
    \EX^{z^nx}[E'_i - E_i]\le \frac{b_i}{2}\sqrt{2I^{z^nx}(W;(\tilde{Y}_i,\tilde{Y}_{i-1}))},
\end{align}
where $b_i=3\times 2^{-i}$. Also note that $\lim_{i\to\infty} E_i = \lim_{i\to\infty} E'_i = 0$ and $\lim_{M\to\infty} D_M =D$. Finally, 
taking expectation with respect to $x$ and $z^n$, we have
\begin{equation}
    \text{MER}^n_\ell \le \EX[L' - L] \le 3 \sum_{i=i_1}^\infty  2^{-i}\sqrt{2I(W;(\tilde{Y}_i,\tilde{Y}_{i-1})|Z^nX)},
\end{equation}
where the first inequality is based on the discussions in proof of Lemma~\ref{lemma:base_bound_for_Bayes_data_process_gap}. If we further assume that $\tilde{Y}_{i-1}$ is a function of $\tilde{Y}_i$ we have
\begin{equation}
\label{eq:MER_chaining_bound}
    \text{MER}^n_\ell \le 3 \sum_{i=i_1}^\infty  2^{-i}\sqrt{2I(W;\tilde{Y}_i|Z^nX)}.
\end{equation}
Note that even if $I(W;\tilde{Y}_i|Z^nX)$ is not finite, the summation in \ref{eq:MER_chaining_bound} could be finite as long as the rate at which the sequence $\left(I(W;\tilde{Y}_i|Z^nX)\right)_{i}$ goes to infinity, is small.

\section{Proof of Theorem~\ref{theorem:rate_distortion_eq_MER}}
\begin{reptheorem}{theorem:rate_distortion_eq_MER}
For a given training set size $n$, for all rates $R\ge I(W;Z^n)$, we have 
\begin{equation*}
D_n(R) = \mathrm{{MER}}_\ell^n.
\end{equation*}
\end{reptheorem}
\begin{proof}
\begin{align*}
D_n(R) &= \inf_{P_{\hat{h}}^{Z^n}} \EX[d(W,\hat{h})]
\\&= \inf_{P_{\hat{h}}^{Z^n}} \EX_{W,\hat{h}XY}[\ell(Y,\hat{h}(X))-\ell(Y,h^*_W(X))]
\\&=-R_\ell(Y|WX)+ \inf_{P_{\hat{h}}^{Z^n}} \EX_{\hat{h}XY}[\ell(Y,\hat{h}(X))]
\\&=-R_\ell(Y|WX)+ \inf_{P_{\hat{h}}^{Z^n}} \EX_{XZ^n\hat{h}}\left[\EX_Y^{XZ^n\hat{h}}[\ell(Y,\hat{h}(X))]\right]
\\&\ge -R_\ell(Y|WX)+ \inf_{P_{\hat{h}}^{Z^n}} \EX_{XZ^n\hat{h}}\left[\psi^*_{Y|Z^nX}(Z^n,X)\right]
\\&= -R_\ell(Y|WX)+  R_\ell(Y|XZ^n)
\\&= \text{MER}_\ell^n
\end{align*}
where all the minimizations are subject to the constraint $I(W;\hat{h})\le R$ and $P_{WZ^n\hat{h}}=P_W\otimes{}P_{Z^n}^W\otimes{}P_{\hat{h}}^{Z^n}$. Now note that when $R\ge I(W;Z^n)$, by data processing inequality we have $I(W;\hat{h})\le  I(W;Z^n) \le R$, thus the constraint on $I(W;\hat{h})$ is automatically satisfied. In this case it is enough to use the deterministic algorithm which chooses $\hat{h}(.)=\psi^*_{Y|Z^nX}(z^n,.)$ for the given $z^n$ to achieve $\text{MER}_\ell^n$.
\end{proof}

\section{Proof of Theorem~\ref{theorem:DUn_bound}}

\begin{reptheorem}{theorem:DUn_bound}
For any bounded loss function $\ell:\mathcal{Y}\times \mathcal{Y} \to [0,b]$, and for all $n \ge 1$, we have 
\begin{align}
D^U_n(I(W;Z^n)) 
& \le \sqrt{\frac{b^2}{2}I(W;Y|Z^n,X)}\\
&\le \sqrt{\frac{b^2}{2n}I(W;Z^n)}.
\end{align}
\end{reptheorem}
\begin{proof}
The construction of the proof is similar to the proof of Lemma~\ref{lemma:base_bound_for_Bayes_data_process_gap}. Consider the (non-optimal) estimator $\hat{h}(x)=\psi^*_{Y|WX}(W',x)$ where $W'$ is a sample from the posterior $P_W^{Z^nX}$. As was proved in Lemma~\ref{lemma:base_bound_for_Bayes_data_process_gap}, we have
\begin{align}
\EX[d(W,\hat{h})] 
&=\EX_{XYZ^n\hat{h}}[\ell(Y,\hat{h}(X))]  - R_{\ell}(Y|WX) \label{eq:EX_d_W_hat_h}
\\&\le \sqrt{\frac{b^2}{2}I(Y;W|Z^nX)} \nonumber
\\&\le \sqrt{\frac{b^2}{2n}I(W;Z^n)}, \nonumber
\end{align}
where the second inequality is proved 
by applying the chain rule on $I(Z^n;W)$ and noting that $I(Z;W|Z^n)$ is a decreasing function of $n$ (see the Proof of Theorem 2 in \cite{xu2020minimum}). 
Now, we just need to show that this process of generating $\hat{h}$ provides a feasible point for the optimization of $D_n^U(R)$ for $R=I(W;Z^n)$ in Eq.~(\ref{eq:rate_distortion_upper_bound}). To see this, note that $P_{WZ^n}=P_{W'Z^n}$ and we have $I(W;Z^n)=I(W';Z^n)$.
Now, since the Markov chain $W-Z^n-W'-\hat{h}$ holds, by using the data processing inequality, we have 
$$
I(Z^n;\hat{h})\le I(Z^n;W') = I(W;Z^n)=R.
$$
Thus, we have found a feasible candidate for solving the minimization, and $\EX[d(W,\hat{h})]$ in Eq.~(\ref{eq:EX_d_W_hat_h}) provides an upper bound on the real solution $D^U_n(I(W;Z^n))$.
\end{proof}

Note that in the proof of Theorem \ref{theorem:DUn_bound}, the proposed solution to find $\hat{h}$ was also a parametric approach; i.e., $\hat{h}\in\mathcal{H}\triangleq\left\{ \psi^*_{Y|WX}(w,.) | w\in\mathcal{W} \right\}$. So we actually found an upper bound which would still work if there was another constraint on optimization of $D^U_n(R)$ (Eq.~(\ref{eq:rate_distortion_upper_bound})) which restricted $\hat{h}\in\mathcal{H}$.

\section{Proof of Theorem~\ref{theorem:bound_on_difference_of_DUn_and_DL}}

\begin{reptheorem}{theorem:bound_on_difference_of_DUn_and_DL}
For any bounded loss $\ell:\mathcal{Y}\times\mathcal{Y}\to [0,b]$, we have
\begin{equation}
D^U_n(R) \le D^L(R) + \sqrt{\frac{b^2}{2} I(W;\hat{h}_R|Z^n)},
\end{equation}
where the mutual information is based on the distribution $P_{W,\hat{h}_R Z^n}=P_{W}\otimes P^{*W}_{\hat{h}_R}\otimes P_{Z^n}^W$ and $P^{*W}_{\hat{h}_R}$ is a solution to the optimization of $D^L(R)$.
\end{reptheorem}
\begin{proof}
Fix the rate $R$. Consider random variable $\hat{h}_R$ which its joint distribution with $W$ is $P_{W,\hat{h}_R}=P_{W}\otimes P^{*W}_{\hat{h}_R}$. Now let
parameter 
$W' \sim P_W^{Z^n}$ be an independent copy of $W$ for the given $Z^n$
and consider another random variable $\hat{h}'_R$ generated from $W'$ using the conditional distribution $P_{\hat{h}'_R}^{W'}=P^{*W}_{\hat{h}_R}$.

Note that $P_{\hat{h}'_R}^{Z^n}$
is in the feasible set of optimization of $D^U_n(R)$ since 
$$I(Z^n;\hat{h}'_R)\le I(W';\hat{h}'_R) = I(W;\hat{h}_R) \le R,$$
where the equality is due to the fact that $P_{W'\hat{h}'_R}=P_{W\hat{h}_R}$ and the final inequality is true since $P^{W}_{\hat{h}_R}=P^{*W}_{\hat{h}_R}$ is a solution for $D^L(R)$. 

Thus, by bounding $E[d(W,\hat{h}'_R)]$ we can find an upper bound on $D^U_n(R)$.
This is done by using Lemma~\ref{lemma:xu2017main} for the function $d(W,\hat{h})$. Fix $z^n$, and note that  $P_{W\hat{h}'_R}^{Z^n}=P_{W}^{Z^n}\otimes P_{\hat{h}_R}^{Z^n}$. Thus, the conditions of Lemma~\ref{lemma:xu2017main} are satisfied and we have  
$$\EX^{z^n}[d(W,\hat{h}'_R)] - \EX^{z^n}[d(W,\hat{h}_R)]\le \sqrt{\frac{b^2}{2}I^{z^n}(W;\hat{h}_R)}.$$
By taking expectation from both sides and using the concavity of the square root function, we get
$$\EX[d(W,\hat{h}'_R)] - \EX[d(W,\hat{h}_R)]\le \sqrt{\frac{b^2}{2}I(W;\hat{h}_R|Z^n)}.$$
Finally, since $P_{W\hat{h}_R}$ provides a solution for $D^L(R)$ and $P_{W\hat{h}'_R}$ provides an upper bound for $D^U_n(R)$, we have
$$D^U_n(R)-D^L(R) \le \EX[d(W,\hat{h}'_R)] - \EX[d(W,\hat{h}_R)]\le \sqrt{\frac{b^2}{2}I(W;\hat{h}_R|Z^n)},$$
which concludes the proof.
\end{proof}

\section{Proof of Theorem~\ref{theorem:convergence_of_rate_distortion_functions}}
To prove this theorem, it is easier to describe the achievable rate $R$ as a function of distortion $D$.
So instead of $D_n(R)$, $D^L(R)$ and $D^U_n(R)$, consider optimizations
\begin{align}
\label{eq:optimization_R_n_D}
    R_n(D)= \inf_{P_{\hat{h}}^{Z^n}} I(W;\hat{h}),
\\
\mathrm{s.t.}\;  \EX[d(W,\hat{h})]\le D, \nonumber
\end{align}
and
\begin{align}
\label{eq:optimization_R_L_D}
    R^L(D)= \inf_{P_{\hat{h}}^{W}} I(W;\hat{h}),
\\
\mathrm{s.t.}\;  \EX[d(W,\hat{h})]\le D, \nonumber
\end{align}
and 
\begin{align}
\label{eq:optimization_R_U_n_D}
    R^U_n(D)= \inf_{P_{\hat{h}}^{Z^n}} I(Z^n;\hat{h}),
\\
\mathrm{s.t.}\;  \EX[d(W,\hat{h})]\le D. \nonumber
\end{align}
The following theorem describes a relation between $R^L(D)$ and $R^U_n(D)$.

\begin{theorem}
\label{theorem:bound_on_difference_of_RUn_and_RL}
Suppose the distortion $d(W,\hat{h})$ defined in Eq.~\eqref{eq:d-def} can be represented as a distance $d'(h_W^*, \hat{h})$. Let $W$ and $W'$ be two samples independently generated from $P_W^{Z^n}$.
We have
\begin{equation*}
R^U_n(D') \le R^L(D) - I(W;\hat{h}_D|Z^n),
\end{equation*}
where $D'=D+\EX[d(W,W'_n)]$ and the conditional mutual information is based on the  distribution $P_{WZ^n\hat{h}_D}=P_{W}\otimes P_{Z^n}^W \otimes P^{*W}_{\hat{h}_D}$ and $P^{*W}_{\hat{h}_D}$ is a solution to the optimization of $R^L(D)$.
\end{theorem}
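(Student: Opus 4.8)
The plan is to follow the template of Theorem~\ref{theorem:bound_on_difference_of_DUn_and_DL}: produce an explicit feasible point for the optimization defining $R^U_n(D')$ whose rate equals $R^L(D)-I(W;\hat h_D\mid Z^n)$, from which the inequality is immediate. First I fix a minimizer $P^{*W}_{\hat h_D}$ of the optimization for $R^L(D)$, so that under $P_{W\hat h_D}=P_W\otimes P^{*W}_{\hat h_D}$ we have $I(W;\hat h_D)=R^L(D)$ and $\EX[d(W,\hat h_D)]\le D$. I then build a candidate channel $P_{\hat h'}^{Z^n}$ for the upper-bound problem as follows: given $Z^n$, draw a fresh posterior sample $W'\sim P_W^{Z^n}$ (the second sample in the theorem hypothesis), and then draw $\hat h'$ from $W'$ through the kernel $P^{*W}_{\hat h_D}$. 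By construction this yields a Markov chain $W-Z^n-W'-\hat h'$.

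Next I check feasibility, i.e.\ that $\EX[d(W,\hat h')]\le D'$. Here I use the hypothesis that $d(W,\hat h)=d'(h^*_W,\hat h)$ is a genuine distance, so the triangle inequality gives $d'(h^*_W,\hat h')\le d'(h^*_W,h^*_{W'})+d'(h^*_{W'},\hat h')$. Taking expectations, the first term on the right is $\EX[d(W,W'_n)]$ by definition. For the second term I observe that $(W',\hat h')$ has the same joint law as $(W,\hat h_D)$: this is because $P_{Z^nW'}$ obtained by sampling $Z^n\sim P_{Z^n}$ then $W'\sim P_W^{Z^n}$ coincides with the original joint $P_{WZ^n}=P_W\otimes P_{Z^n}^W$ (the defining property of the posterior), and $\hat h'$ is obtained from $W'$ by the same kernel $P^{*W}_{\hat h_D}$. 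Hence $\EX[d(W',\hat h')]=\EX[d(W,\hat h_D)]\le D$, and therefore $\EX[d(W,\hat h')]\le D+\EX[d(W,W'_n)]=D'$.

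Then I compute the rate $I(Z^n;\hat h')$ of the candidate. From the same distributional identity, the full joint satisfies $P_{Z^nW'\hat h'}=P_{Z^nW}\otimes P^{*W}_{\hat h_D}$ (after relabeling $W'\leftrightarrow W$), so the two-variable marginal $P_{Z^n\hat h'}$ equals the corresponding marginal of the distribution $P_{WZ^n\hat h_D}=P_W\otimes P_{Z^n}^W\otimes P^{*W}_{\hat h_D}$ named in the statement; consequently $I(Z^n;\hat h')=I(Z^n;\hat h_D)$, the latter evaluated under that distribution. Under $P_{WZ^n\hat h_D}$ we have the Markov chain $\hat h_D-W-Z^n$, so expanding $I(\hat h_D;W,Z^n)$ by the chain rule in the two possible orders gives $I(\hat h_D;Z^n)=I(\hat h_D;W)-I(\hat h_D;W\mid Z^n)=R^L(D)-I(W;\hat h_D\mid Z^n)$. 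Since $\hat h'$ is a feasible point for $R^U_n(D')$ achieving exactly this rate, we conclude $R^U_n(D')\le R^L(D)-I(W;\hat h_D\mid Z^n)$.

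The routine parts are the two chain-rule expansions. The step requiring the most care — the analogue of the ``switch to a posterior sample'' device in Theorem~\ref{theorem:bound_on_difference_of_DUn_and_DL} — is the bookkeeping establishing simultaneously that $(W',\hat h')$ inherits the optimal joint law and that $P_{Z^n\hat h'}$ matches the distribution designated in the statement; both hinge on the identity that $P_{WZ^n}$ is reproduced by sampling $Z^n$ marginally and then $W$ from the posterior. I also need to make sure the triangle-inequality step genuinely invokes the distance hypothesis $d(w,\hat h)=d'(h^*_w,\hat h)$ rather than only the excess-risk structure of $d$, since for a general distortion this decomposition of $D'$ would not hold.
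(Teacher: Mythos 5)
Your proposal is correct and follows essentially the same route as the paper: fix a minimizer $P^{*W}_{\hat h_D}$ for $R^L(D)$, construct the candidate channel $Z^n \to W' \to \hat h'$ with $W'\sim P_W^{Z^n}$ and $\hat h'$ drawn through the same kernel, verify $\EX[d(W,\hat h')]\le D'$ via the triangle inequality for $d'$ together with the distributional identity $P_{W'\hat h'}=P_{W\hat h_D}$, and then identify $I(Z^n;\hat h')$ with $R^L(D)-I(W;\hat h_D\mid Z^n)$ by chain-ruling $I(\hat h_D;W,Z^n)$ in both orders under the Markov chain $\hat h_D-W-Z^n$. The only cosmetic difference is that the paper runs the chain-rule step on $(W',Z^n,\hat h'_D)$ and then invokes $P_{Z^nW\hat h_D}=P_{Z^nW'\hat h'_D}$, whereas you invoke the distributional identity first and chain-rule on $(W,Z^n,\hat h_D)$; these are interchangeable.
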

\begin{proof}
Consider the random variable $\hat{h}'_D$ generated from $W'$ using the conditional distribution $P_{\hat{h}'_D}^{W'}=P_{\hat{h}_D}^{*W}$. We have the Markov chain $\hat{h}_D - W - Z^n - W' - \hat{h}'_D$.
We can write
\begin{align}
\EX[d(W,\hat{h}'_D)] &= 
\EX[d'(h_W^*, \hat{h}'_D)] 
\nonumber\\&\le  \EX[d'(h_W^*, h_{W'}^*)]+\EX[d'(h_{W'}^*, \hat{h}'_D)]
\nonumber\\&\le  \EX[d'(h_W^*, h_{W'}^*)]+D.
\end{align}
In the first inequality we used the triangle inequality which holds since $d'$ is a distance. The second inequality is due to the fact that $P_{W'\hat{h}'_D}=P_{W\hat{h}_D}=P_W\otimes P_{\hat{h}_D}^{*W}$, and that $P_{\hat{h}_D}^{*W}$ is in the feasible set of optimization \eqref{eq:optimization_R_L_D}; i.e., $\EX[d(W,\hat{h}_D)]=\EX[d'(h_{W}^*, \hat{h}_D)]\le D$. If we define $D'=\EX[d'(h_W^*, h_{W'}^*)]+D$, we see that the process of generating $\hat{h}'$ meets the constraint of minimization describing $R^U_n(D')$ (Eq.~\ref{eq:optimization_R_U_n_D}), thus $I(Z^n;\hat{h}'_D)$ provides an upper bound on $R^U_n(D')$. 
Thus, we have
\begin{align}
R^U_n(D') &\le I(Z^n;\hat{h}'_D)  \nonumber
\\&= I(W';\hat{h}'_D) - I(W';\hat{h}'_D|Z^n)
\label{tmp_1:2}
\\&=I(W;\hat{h}_D) - I(W;\hat{h}_D|Z^n)
\label{tmp_1:3}
\\&= R^L(D)-I(W;\hat{h}_D|Z^n),
\label{tmp_1:4}
\end{align}
where the equality~(\ref{tmp_1:2}) is based on     
$I(W';\hat{h}'_D)=I(W',Z^n;\hat{h}'_D)=I(Z^n;\hat{h}'_D)+I(W';\hat{h}'_D|Z^n)$, inequality~(\ref{tmp_1:3}) is based on the fact that $P_{Z^nW\hat{h}_D}=P_{Z^nW'\hat{h}'_D}$ and finally equality~(\ref{tmp_1:4})  is true since $P_{W,\hat{h}_D}=P_{W}\otimes P^{*W}_{\hat{h}_D}$ and $P^{*W}_{\hat{h}_D}$ is a solution to the optimization of $R^L(D)$.
\end{proof}
Now we can prove Theorem~\ref{theorem:convergence_of_rate_distortion_functions}.
\begin{reptheorem}{theorem:convergence_of_rate_distortion_functions}
Suppose the distortion $d(W,\hat{h})$ defined in Eq.~\eqref{eq:d-def} can be represented as a distance $d'(h_W^*, \hat{h})$. Let $W$ and $W'$ be two samples independently generated from $P_W^{Z^n}$. 
If we have $$\lim_{n\to \infty} \EX[d'(h_W^*,h_{W'}^*)] = 0,$$
then 
\begin{equation}
\label{eq:limits_converge}
     \forall R\ge0;\; D^L(R)= \lim_{n\to\infty}D_n(R)=\lim_{n\to\infty}D^U_n(R).
\end{equation}
\end{reptheorem}
\begin{proof}
Note that in the rate distortion theory, functions $R^L(D)$ and $D^L(R)$ are decreasing and convex functions both describing the boundary of the achievable $(R,D)$ pairs  
(see Chapter 10 of \cite{CoverElementsinformationtheory2012}). 
As they are different representations of the same entity, the achievability  results can be derived based on either $R(D)$ or $D(R)$.
The same is true for $R^U_n(D)$ and $D^U_n(R)$, as well as $R_n(D)$ and $D_n(R)$. 
In order to prove the theorem, we need to show that the set of achievable $(R,D)$ pairs coincide for all of three minimization problems as $n\to\infty$.
As such, it is enough to equivalently show that $\lim_{n\to\infty} R^U_n(D) =\lim_{n\to\infty} R_n(D) = R^L(D)$.

Define $d_{\text{min}}=\lim_{n\to\infty} D^L(R)$.
Define $d=D^L(R)$. For now we assume $d>d_{\text{min}}$, later we will also address the case $d=d_{\text{min}}$\footnote{If there is no achievable $d$ that satisfies $d>d_{\text{min}}$, it means that $D^L(R)=d_{\text{min}};\forall R$ and  Eq.~\eqref{eq:limits_converge} is trivially satisfied since for all $n$ and $R$,
$D^L(R)=D^U_n(R)=D_n(R)$ holds. To see this, note that at $R=0$ all the optimizations are equivalent; i.e. $D^L(0)=D^U_n(0)=D_n(0)$ and $D^U_n(R)$ and $D_n(R)$ are decreasing functions lower bounded by $D^L(R)=D^L(0)=d_{\text{min}}$.}. We have $R^L(d)= R \le\infty$. 
Define $\delta_n=\EX[d'(h_W^*,h_{W'}^*)]$.
By using Theorem~\ref{theorem:bound_on_difference_of_RUn_and_RL}, we have
\begin{align}
\lim_{n\to\infty} R^U_n(d) 
&\le \lim_{n\to\infty} R^L(d-\delta_n) - I(W;\hat{h}_{d-\delta_n}|Z^n)
\\&=R^L(d)- \lim_{n\to\infty} I(W;\hat{h}_{d-\delta_n}|Z^n) \label{eq:tmp3:2}
\\&\le R^L(d),
\end{align}
where in the equality \eqref{eq:tmp3:2} the continuity of $R^L(d)$ (for $d > d_{\text{min}}$) is used (see Corollary~9.4.2 of \cite{gallager1968information}) along with the assumption that $\lim_{n\to \infty} \delta_n = 0$. On the other hand, we know that $R^U_n(d)\ge R_n(d) \ge R^L(d); \forall n$. Thus, we have $$\lim_{n\to\infty} R^U_n(d) = \lim_{n\to\infty} R_n(d) =  R^L(d); \forall d>d_{\text{min}}.$$ 
Equivalently, we have
$$\lim_{n\to\infty} D^U_n(r) = \lim_{n\to\infty} D_n(r) =  D^L(r),$$
where $r$ is a rate satisfying $D(r)>d_{\text{min}}$. 
Now, to complete the proof, we just need to handle the case where there exists $r_{\text{max}}$ such that $D(r_{\text{max}})=d_{\text{min}}$, and show
$$\lim_{n\to\infty} D^U_n(r_{\text{max}}) = \lim_{n\to\infty} D_n(r_{\text{max}}) =  D^L(r_{\text{max}}),$$
which is true by monotone convergence since $D^U_n(r)$ and $D_n(r)$ are decreasing functions converging to $D^L(r);\forall r<r_{\text{max}}$ and $D^L(r)$ is continuous.
\end{proof}

\section{Proof of Lemma~\ref{lemma:reparam}}
\begin{replemma}{lemma:reparam}{(Reparameterization Lemma)}
	Let $d: \mathcal{X} \times \hat{\mathcal{X}}\to \mathbb{R}$ be a distortion function. Assume that there exists mappings $f: \mathcal{X} \to \mathcal{V}$ and $g: \hat{\mathcal{X}} \to \hat{\mathcal{V}}$, and a distortion function $d': \mathcal{V} \times \hat{\mathcal{V}} \to \mathbb{R}$, such that for all $x \in \mathcal{X}$, $\hat{x}\in \hat{\mathcal{X}}$, we have $d(x, \hat{x}) = d'(f(x), g(\hat{x}))$. If $\hat{\mathcal{V}} = f(\hat{\mathcal{X}})$, it follows that
	\begin{alignat*}{2}
    &\min_{P_{\hat X}^X}\; \E_{X, \hat X}\;\big[ d(X, \hat X)\big] \;\; =  && \;\; \min_{P_{\hat V}^V}\; \E_{V, \hat V}\;\big[ d'(V, \hat V)\big],
    \\
    &\mathrm{s.t.}\;   I(X; \hat X)  \leq R \nonumber && \mathrm{s.t.}  \; I(V; \hat V)  \leq R \nonumber
    \end{alignat*}
	where the second minimization is the rate-distortion function for random variable $V=f(X)$.
\end{replemma}
\begin{proof}
Define $D_1(R)$ and $D_2(R)$ as follows:
\begin{equation}
	D_1(R) = \min_{P_{\hat X}^X} \E[d(X, \hat X)] \;\; \text{s.t. \;\;} I(X; \hat{X}) \leq R,
\end{equation}
and
\begin{equation}
D_2(R) = \min_{P_{\hat V}^V} \E[d'
(V, \hat V)] \;\; \text{s.t. \;\;} I(V; \hat{V}) \leq R.
\end{equation}

Now, let $P_{\hat X}^{* X}$ be a solution for $D_1(R)$. We have $\E[d(X, \hat{X})] = \E[d'\big(f(X), g(\hat{X})\big)]$ where the expectations are with respect to $P_{X\hat{X}} = P_X \otimes P_{\hat{X}}^{* X}$. Note that the Markov chain $f(X) - X - \hat{X} -   g(\hat X)$ holds. Hence, $I(f(X); g(\hat X)) \leq I(X; \hat{X}) \leq R$. Thus, $P_{\hat X}^{* X}$ provides a feasible candidate for the second optimization, which implies that $D_1(R) \geq D_2(R)$.

So it remains to prove that $D_1(R) \leq D_2(R)$. Let $P_V$ be the distribution of $V = f(X)$ where $X\sim P_X$, and $P_{\hat V}^{* V}$ be the solution for $D_2(R)$. We have 
\begin{equation}
	\E[d'(V, \hat{V})] = \E[d(f^\dagger(V), g^\dagger(\hat{V}))],
\end{equation}
where the expectations are with respect to $(V, \hat V) \sim P_V \otimes P_{\hat V}^{* V}$, and $f^\dagger$ and $g^\dagger$ are functions for which 
$\hat{V} =  g(g^\dagger (\hat{V}))$ and $V =  f(f^\dagger (V))$ almost surely. 
The function ${f}^\dagger$ exists by construction of $P_V$. To ensure the existence of $g^\dagger$, we need to assume that $\hat{\mathcal{V}} = f(\hat {\mathcal{X}})$. 
Now, let $\hat{X} = g^\dagger (\hat{V})$. Note that $V = f(X)$, and that the Markov chain $X - V - \hat{V} - g^\dagger(\hat V) = \hat X$ holds. Hence, $I(X; \hat{X}) \leq I(V; \hat{V}) \leq R$. Which means that $P_{\hat V}^{* V}$ provides a feasible candidate for the first optimization.
On the other hand, we have
\begin{equation}
	\E[d'(V, \hat{V})] = \E[d'(f(X), g(\hat X))] = E[d(X, \hat{X})],
\end{equation}
thus $D_2(R) \leq D_1(R)$, which concludes the proof. 
\end{proof}

\section{Proof of Theorem~\ref{theorem:rate}}
\begin{reptheorem}{theorem:rate}
	Let $\mathcal{W}$ be a $p$-dimensional compact subspace of $\R^p$, the hypothesis set defined in Eq. \eqref{eq:hypothesis_class} be convex, and assume that the regularity conditions of Lemma \ref{lemma:fisher} hold. If there exists a norm $||.||$ such that $||W - \hat{W} ||^2 \leq d'(W, \hat W)$, we have
	\begin{equation*}
	    \mathrm{MER}_\ell^n \geq \frac{p}{n} \cdot \frac{\pi}{\big(V_p \; \Gamma(1+\frac{p}{2})\big)^{\frac{2}{p}}}\exp\Big(-\frac{ \E \log |J_Z^W(W)|}{p} \Big),
	\end{equation*}
	as $n\to \infty$, where $V_p$ is the volume of $\{x \in \R^p: ||x||\leq 1\}$.
\end{reptheorem}
\begin{proof}

Define 
\begin{equation}
	R^L(D) = \inf_{P_{\hat W}^W} I(W; \hat W) \;\; \text{s.t.} \;\; \E[d'(W, \hat W)] \leq D,
\end{equation}
which is the inverse of $D^L(R)$ (see Lemma 4.1.2 of \cite{Gray}) and reparameterization lemma (Lemma \ref{lemma:reparam}) is used to define the problem in terms of $d'(W,\hat W)$.
It is assumed that for all $w, \hat w$, the distortion measure $d'(w, \hat w)$ can be lower bounded by $||w-w'||^2$, where $||.||$ is a norm in $\mathbb{R}^p$. Hence, for the rate-distortion function $R'_L(D)$, defined as
\begin{equation}
{R'}_L(D) = \inf_{P_{\hat W}^W} I(W; \hat W) \;\; \text{s.t.} \;\; \E[||W - \hat W||^2] \leq D,
\end{equation}
we have $R'_L(D) \leq R^L(D)$. We can use Shannon Lower Bound (Lemma~\ref{SLB}), with $r=2$, to further lower bound $R'_L(D)$:
\begin{equation*}
	R^L(D) \geq R'_L(D) \geq h(W) - \log\Big(V_p \big(\frac{2De}{p}\big)^{\frac{p}{2}} \Gamma(1 + \frac{p}{2})\Big).
\end{equation*}
Thus, for $D^L(R)$, we can write
\begin{equation}
	\label{eq:DofR}
	D^L(R) \geq \frac{p}{2eC_p} \exp\bigg(\frac{2h(W) - 2R}{p}\bigg),
\end{equation}
in which ${C_p = \Big(V_p\, \Gamma(1+\frac{p}{2})\Big)^{\frac{2}{p}}}$. Minimum Excess Risk and $D^L(R)$ are as %
\begin{equation}
	\mathrm{MER}_\ell^n = D_n(I(W; Z^n)) \geq D^L(I(W; Z^n)),
\end{equation}
where the equality is due to Theorem~\ref{theorem:rate_distortion_eq_MER}.
To derive an asymptotic lower bound on MER as $n$ goes to infinity, we can use Lemma~\ref{lemma:fisher} to lower bound $D^L(I(W; Z^n)$. Based on this lemma, we have
\begin{equation}
    \label{eq:asymp_mutual_W_Zn}
	I(W; Z^n) = \frac{p}{2} \log\big(\frac{n}{2\pi e}\big) + h(W) + \frac{\E\big[\log | J_Z^W(W)|\big]}{2} + o(1) \;\;\;\;\; \text{as}\;\; n\to\infty.
\end{equation}
By substituting $I(W; Z^n)$ from \eqref{eq:asymp_mutual_W_Zn} in Eq. \eqref{eq:DofR}, we arrive at the following lower bound
\begin{equation*}
	\text{MER}_\ell^n \geq \frac{p}{n} \cdot \frac{\pi}{\big(V_p \; \Gamma(1+\frac{p}{2})\big)^{\frac{2}{p}}}\exp\Big(-\frac{ \E \log |J_Z^W(W)| + o(1)}{p} \Big).
\end{equation*}
Note that the differential entropy $h(W)$ is vanished.
\end{proof}

\begin{repcorollary}{corollary:p-over-n}
Under the conditions of Theorem \ref{theorem:rate}, if $||\cdot|| = ||\cdot||_{A}$ ~ for some positive-definite matrix $A$, and by assuming  that for all $w$; $(J_{Z}^{W}(w))_{ii} \leq c$, we have
\begin{equation*}
    \mathrm{MER}_\ell^n \geq \frac{\gamma p}{n c} =  \Omega\Big(\frac{p}{n}\Big),
\end{equation*}
as $n\to \infty$, in which $\gamma$ is the smallest eigenvalue of $A$.
\end{repcorollary}
\begin{proof}
In this case, we have $V_p = (\det A)^{-\frac{1}{2}}\frac{\pi^{\frac{p}{2}}}{\Gamma(1+\frac{p}{2})}$. Note that  $V_p \leq (\frac{\pi}{\gamma})^{\frac{p}{2}}
    \Gamma^{-1}(1+\frac{p}{2})$.
Besides, $\E_W \log |J_Z^W(W)| \leq p \log(c)$. Hence, based on Theorem \ref{theorem:rate}, as $n\to \infty$ we have 
\begin{align}
    \text{MER}_\ell^n &\geq \frac{p}{n} \cdot \frac{\pi}{\big(V_p \; \Gamma(1+\frac{p}{2})\big)^{\frac{2}{p}}}\exp\Big(-\frac{ \E \log |J_Z^W(W)|}{p} \Big) \geq \frac{p\gamma}{n}\cdot \frac{1}{c},
\end{align}
concluding the proof.
\end{proof}

\end{document}